\newcommand{\R}{\mathbb{R}}
\newcommand{\vdelta}{{\bm{\delta}}}
\newcommand{\vlambda}{{\bm{\lambda}}}
\newcommand{\valpha}{{\bm{\alpha}}}
\newcommand{\va}{{\bm{a}}}
\newcommand{\vb}{{\bm{b}}}
\newcommand{\ve}{{\bm{e}}}
\newcommand{\vg}{{\bm{g}}}
\newcommand{\vu}{{\bm{u}}}
\newcommand{\vw}{{\bm{w}}}
\newcommand{\vx}{{\bm{x}}}
\newcommand{\mA}{{\bm{A}}}
\newcommand{\mS}{{\bm{S}}}
\newcommand{\mU}{{\bm{U}}}
\newcommand{\mV}{{\bm{V}}}
\newcommand{\mSigma}{{\bm{\Sigma}}}
\newcommand{\sD}{{\mathbb{D}}}
\newcommand{\sN}{{\mathbb{N}}}
\newcommand{\sR}{{\mathbb{R}}}
\newcommand{\sS}{{\mathbb{S}}}
\newcommand{\sT}{{\mathbb{T}}}
\newcommand{\sV}{{\mathbb{V}}}
\newcommand{\sX}{{\mathbb{X}}}
\newcommand{\sY}{{\mathbb{Y}}}
\DeclareMathOperator*{\argmax}{arg\,max}
\DeclareMathOperator*{\argmin}{arg\,min}
\newtheorem{definition}{Definition}
\newtheorem{theorem}{Theorem}
\newtheorem{lemma}{Lemma}
\newtheorem{corollary}{Corollary}
\begin{document}

\title{Spanning attack: reinforce black-box attacks \\with unlabeled data}





\author{Lu Wang\textsuperscript{\textnormal{1,2}}
\quad Huan Zhang\textsuperscript{\textnormal{3}}
\quad Jinfeng Yi\textsuperscript{\textnormal{2}}
\quad Cho-Jui Hsieh\textsuperscript{\textnormal{3}}
\quad Yuan Jiang\textsuperscript{\textnormal{1}}
\\
\textsuperscript{1}National Key Laboratory for Novel Software Technology, \\
Nanjing University, Nanjing 210023, China\\
\textsuperscript{2}JD AI Research, JD.com, Beijing 100101, China\\
\textsuperscript{3}Department of Computer Science,
University of California, Los Angeles, CA 90095\\
\texttt{wangl@lamda.nju.edu.cn} / \texttt{wangl@jd.com}
\quad \texttt{huanzhang@ucla.edu} \\
\quad \texttt{yijinfeng@jd.com}
\quad \texttt{chohsieh@cs.ucla.edu}
\quad \texttt{jiangy@lamda.nju.edu.cn}
}

\maketitle


\begin{abstract}
    Adversarial black-box attacks aim to craft adversarial perturbations
    by querying input--output pairs of machine learning models.
    They are widely used to evaluate the robustness of pre-trained models.
    However, black-box attacks often suffer from the issue of query inefficiency due to the high dimensionality of the input space,
    and therefore 
    incur a false sense of model robustness.
    In this paper, we relax the conditions of the black-box threat model,
    and propose a novel technique called the \emph{spanning attack}. 
    By constraining adversarial perturbations in a low-dimensional subspace
    via \emph{spanning} an auxiliary unlabeled dataset,
    the spanning attack significantly improves the query efficiency
    of a wide variety of existing black-box attacks.
    Extensive experiments show that the proposed method works favorably
    in both soft-label and hard-label black-box attacks.
    Our code is available at \url{https://github.com/wangwllu/spanning_attack}.
\end{abstract}



\section{Introduction}

It has been shown that machine learning models,
especially deep neural networks,
are vulnerable to small adversarial perturbations,
i.e., a small carefully crafted perturbation
added to the input may significantly change 
the prediction results~\citep{szegedy2014intriguing,goodfellow2015explaining,biggio2018wild,fawzi2018analysis}. 
Consequently, the problem of finding those perturbations,
also known as \emph{adversarial attacks},
has become an important way to evaluate model robustness:
the more difficult to attack a model, the more robust the model is. 

Depending on the type of information available to the adversary,
adversarial attacks can be categorized into
\emph{white-box attacks} and \emph{black-box attacks}.
In the {white-box} setting, 
the \emph{target model} (the model to attack) is completely exposed to the attacker, 
and adversarial perturbations could be
crafted by exploiting the first-order information
(or any higher order information),
i.e., gradients
with respect to the input~\citep{carlini2017towards,madry2018towards}. 
Despite its efficiency and effectiveness,
the white-box setting often stands for an overly strong and pessimistic \emph{threat model},
and white-box attacks are usually not practical
when attacking real-world machine learning systems
due to the fact that their gradient information is often invisible to the attacker.

In this paper, we focus on the problem of \emph{black-box} attacks: 
the case where the model structure and parameters (weights)
are not available to the attacker~\citep{chen2017zoo}.
The attacker can only gather necessary information
by means of (iteratively) making input queries to the model
and obtaining the corresponding outputs.
The black-box setting is a more realistic threat model,
and furthermore, crucial in the sense that
they could serve as a general way to evaluate the robustness
of machine learning models beyond neural networks,
even when the model is not differentiable
(e.g., evaluating the robustness of tree-based  models~\citep{chen2019robust}
and nearest neighbor models~\citep{wang2019evaluating,wang2020provably}).

Black-box attacks have been extensively studied in the past few years.
Depending on what kind of outputs the attacker could derive, black-box attacks could be
broadly grouped into two categories: 
\emph{soft-label attacks}~\citep{chen2017zoo}
and \emph{hard-label attacks}~\citep{brendel2018decision}. 
Soft-label attacks assume that the attacker
has access to real-valued scores (logits or probabilities) for all labels,
while hard-label attacks assume  that the attacker only has access to
the final discrete decision (the predicted label). 
However, black-box attacks,
especially hard-label attacks,
usually require a large number of (typically $>10K$) queries
for each adversarial perturbation.
High query complexity
limits the scope of application of black-box attacks,
and also incurs a false sense of model robustness.

We notice that the convergence rates
of the zeroth-order optimization methods
used for black-box attacks
are shown to be proportional to the dimensionality of the input space~\citep{nesterov2017random,wang2017stochastic,tu2019autozoom}. 
As a consequence,
we have a natural conjecture:
the query complexity of black-box attacks
is also dependent on the dimensionality of the input space,
and thus reducing its dimensionality \emph{in a certain delicate way} can 
enhance the query efficiency of black-box attacks. 

Based on the idea above, in this paper we propose a method
--- the \emph{spanning attack} ---
to constrain the search space of black-box attacks
for the purpose of tackling the inefficiency issue.
The spanning attack is motivated by our theoretical analysis that
\emph{minimum adversarial perturbations} for a variety of machine learning models
prove to be in the \emph{subspace} of the training data.
Specifically,
we relax the conditions of the black-box threat model by additionally assuming that
a small \emph{auxiliary unlabeled dataset} is available to the attacker.
The assumption is reasonable:
imagine that before attacking an image classification model,
the attacker just needs to ``collect'' some unlabeled images,
from the Internet for instance.
This auxiliary unlabeled dataset plays as a substitute for the original training data:
this dataset \emph{spans} a subspace of the input space.
Then, we constrain the search of adversarial perturbations only in this subspace,
of which the dimensionality is much smaller than the one of the original input space.
The overall workflow of the spanning attack is illustrated in Figure~\ref{fig:workflow}.

\begin{figure}
    \centering
    \includegraphics[width=.8\textwidth]{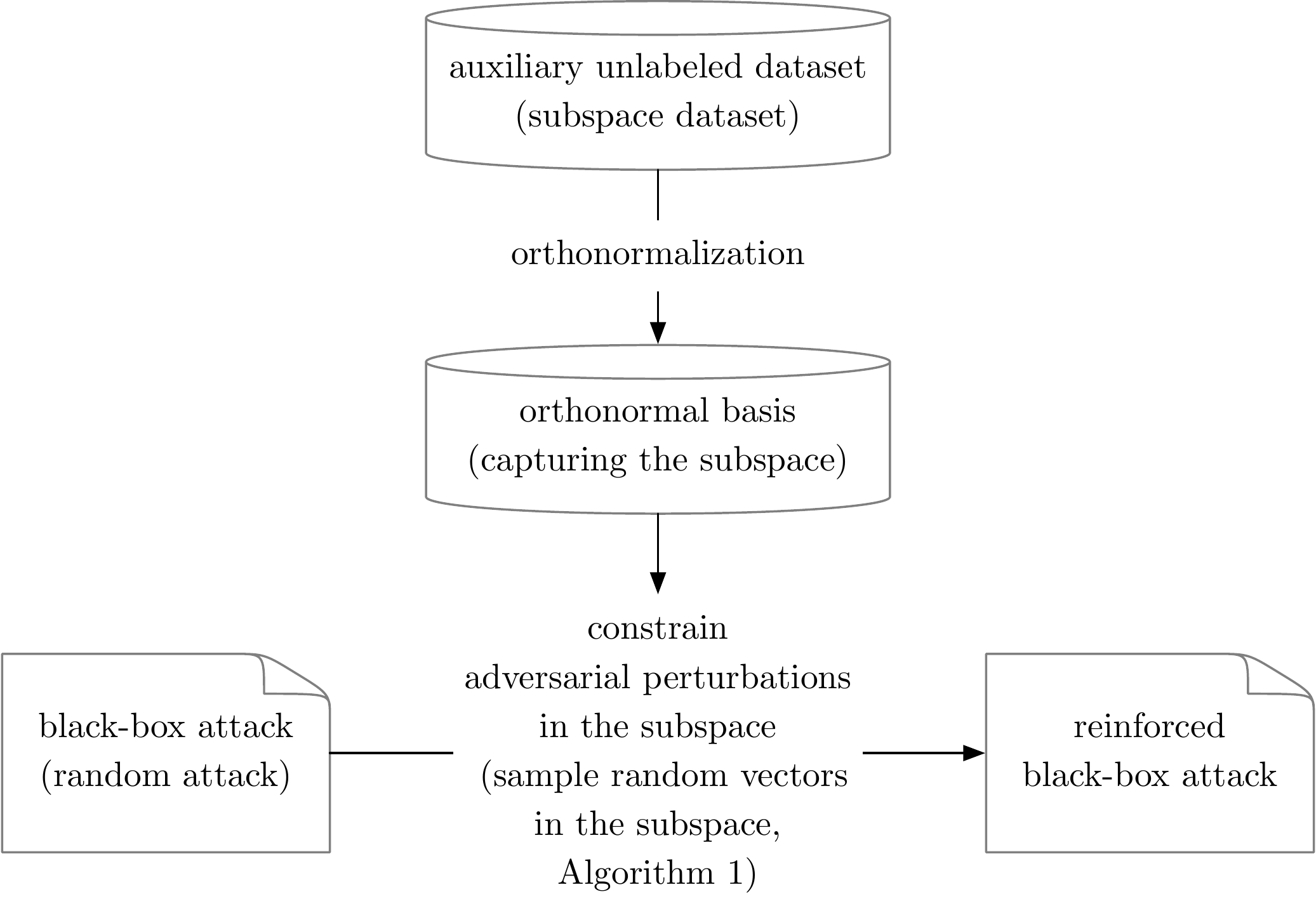}
    \caption{Workflow of the spanning attack.}
    \label{fig:workflow}
\end{figure}

We also show that the spanning attack method is general enough
to apply to a wide range of existing black-box attack methods,
including both soft-label attacks and hard-label attacks.
Our experiments verify that the spanning attack could
significantly improve query efficiency of black-box attacks.
Furthermore, we show that even a very small and \emph{biased} unlabeled dataset
sampled from a distribution different from the training data suffices
to perform favorably in practice.
This finding suggests that
the assumption of the spanning attack (about the auxiliary unlabeled dataset)
is not too strict to be satisfied.

In summary, this paper makes the following contributions:
\begin{enumerate}
    \item We present the \emph{random attack} framework which captures most existing black-box attacks
    in various settings
    including both soft-label attacks and hard-label attacks.
    It is a novel and intuitive interpretation
    on the mechanism of black-box attacks
    from the perspective of random vectors.
    \item We propose a method to regulate the resulting adversarial perturbation of any random attack
    to be constrained in a predefined subspace.
    This is a general method to reduce the dimensionality of the search space of black-box attacks.
    \item We make preliminary theoretical analysis about the subspace
    in which the \emph{minimum adversarial perturbation} is guaranteed to be placed.
    Motivated by our analysis, we propose to reinforce black-box attacks (random attacks)
    by means of constraining a subspace spanned by an auxiliary unlabeled dataset.
    In our experiments across various black-box attacks and target models,
    the reinforced attack typically requires less than $50\%$ queries
    while improves success rates in the meantime.
\end{enumerate}

The remainder of the paper is organized as follows:
Section~\ref{sec:related} discusses related work about black-box attacks;
Section~\ref{sec:background} introduces the basic preliminaries and our motivation;
Section~\ref{sec:method} presents our framework for black-box attacks
and proposes our general method to improve query efficiency;
Section~\ref{sec:exp} reports empirical evaluation results;
Section~\ref{sec:conclusion} concludes this paper.

\section{Related work} \label{sec:related}

\paragraph{Transfer-based black-box attacks.}
The first practical black-box attack is the transfer-based attack~\citep{papernot2017practical}.
A substitute model is trained with synthetic instances labeled by the target model (solf labels or hard labels).
Then, the adversarial perturbation is crafted to fool the target model by attacking the substitute model.
The effectiveness highly depends on
transferability of adversarial perturbations~\citep{papernot2016transferability,liu2017delving}.
Accordingly, the attack performance is severely
degraded with poor transferability~\citep{su2018is}. 
Therefore, we mainly talk about black-box attacks based on zeroth-order optimization as below.

\paragraph{Soft-label black-box attacks.}
\citet{chen2017zoo} showed that soft-label black-box attacks
can be formulated
as solving an optimization problem
in the zeroth-order scenario,
in which one can query the function itself but not its gradients.
Since then, many black-box attack methods based on zeroth-order optimization 
have been proposed
such as ZO-Adam~\citep{chen2017zoo},
NES~\citep{ilyas2018black},
ZO-SignSGD~\citep{liu2018signsgd},
AutoZOOM~\citep{tu2019autozoom}, and 
Bandit-attack~\citep{ilyas2019prior}.

\paragraph{Hard-label black-box attacks.}
Hard-label black-box attacks are more challenging
since it is non-trivial to define a smooth objective function for attacks
based only on the hard-label decisions.
\citet{brendel2018decision} proposed a method
based on reject sampling and random walks.
\citet{cheng2019query} reformulated the attack
as a real-valued optimization problem and the objective function
is estimated via coarse-grained search and then binary search.
\citet{chen2019hopskipjumpattack} proposed an unbiased estimator
of the gradient direction at the decision boundary,
and presented an attack method with a convergence analysis.
\citet{cheng2020signopt} proposed a query-efficient sign estimator of the gradient.

\paragraph{Improve query efficiency of black-box attacks.}
Recently, the idea of relaxing the threat model to
improve query efficiency of black-box attacks
has attracted increasing attention. 
Some work captured the idea of
transfer-based attacks~\citep{papernot2016transferability,liu2017delving}:
adversarial examples of a surrogate model also tend to fool other models.
\citet{brunner2018guessing} and \citet{cheng2019improving}
both assumed that a \emph{surrogate model} is available to the attacker.
Therefore, the attacker could employ the gradients of the surrogate model as a prior
for the true gradient of the target model.
Another work \citep{guo2019subspace}
proposed a soft-label black-box attack method
which employs an auxiliary \emph{labeled} datasets.
Multiple \emph{reference models} are trained with the \emph{labeled} datasets,
and a subspace is spanned by perturbed gradients of these reference models.
Then the true gradients of the target model are estimated in the subspace.
The major difference from our work is that their auxiliary dataset
has to be labeled, whereas ours is unlabeled.
Moreover, their auxiliary dataset is much larger
than ours owing to the need for training reference models:
in the ImageNet case, we only need less than 1,000 unlabeled instances,
whereas \citet{guo2019subspace} require 75,000 labeled instances.
Finally, our method is more general, and can be applied to both
soft-label and hard-label black-box attacks.

\section{Background and motivation} \label{sec:background}

We introduce the notations regarding black-box adversarial attacks.
Let $\sX = \sR^D$ denote the input space
where $D\in\sN^+$ is the number of dimensions,
and let $\sY = [C]$ denote the output space
where $C\in\sN^+$ is the number of labels.
The function $c: \sX \rightarrow \sY$ is a classifier (the target model)
and makes decisions by
\begin{align*}
    c(\vx) = \argmax_{i\in[C]}\ f(\vx)_i,
\end{align*}
where $f: \sX \rightarrow \R^C$ is the score function of the classifier,
which outputs scores of all labels for any given input.

Given a radius $\epsilon > 0$
and a correctly-classified labeled instance $(\vx, y)\in\sX\times\sY$,
the \emph{untargeted attack} aims to find an \emph{adversarial perturbation}
$\vdelta \in \sX$ with the norm $\lVert\vdelta\rVert\leq \epsilon$
such that the classifier predicts a label for the perturbed instance $\vx+\vdelta$
different from the original instance $\vx$, i.e., $c(\vx+\vdelta) \neq y$.
In comparison, the \emph{targeted attack} aims to make the classifier predict a pre-specified label.
Our paper will focus on untargeted attacks,
while it is easy to extend to targeted attacks.
Besides, we focus on the $\ell_2$ norm (the Euclidean norm) perturbation:
the magnitude of adversarial perturbations are measured by the $\ell_2$ norm,
and further research on general norms are deferred for future work.

In the soft-label setting,
the attacker has access to the score (logit or probability) output
for any input $\vx$ in $\sX$, i.e., $f(\vx)$.
Therefore, any loss function defined on the the pair of the score and the ground-truth label
is also available to the attacker.
We denote the loss function as $\ell_f(\vx, y)$.
In contrast,
in the hard-label setting,
the attacker only has access to the final decision (the predicted label)
for any input $\vx$ in $\sX$,
i.e., $c(\vx)$.
It is more challenging than soft-label attacks
due to less information available.
The number of queries,
to $f(\cdot)$ or $c(\cdot)$, is the cost of black-box attacks.
It is crucial to reduce the number of queries required
when applying attack methods in real applications.


In practice, the input space $\sX$ is usually high-dimensional:
for instance, the typical input image for an ImageNet model
has $224 \times 224 \times 3 = 150,528$ pixels.
It is suspected that requiring such a large amount of queries, often $>10K$,
when searching for an adversarial perturbation $\vdelta$ in $\sX$
is probably owing to the high dimensionality of $\sX$.
To verify our conjecture,
a natural question for black-box attacks is as below:
\begin{center}
    \emph{``Is it possible to reduce the number of queries \\
        for general black-box attacks \\
        by reducing the dimensionality of the search space?
        ''}
\end{center}
In this paper,
we provide a positive answer to this question
by proposing a method reinforcing black-box attacks with 
a small set of unlabeled data.

\section{Proposed method} \label{sec:method}

We first introduce the technique on
constraining (transforming) adversarial perturbations
into a predefined subspace
for general black-box attacks,
and then propose a method which utilizes an auxiliary unlabeled dataset
to select an appropriate subspace.

\subsection{Subspace transformation}

\begin{definition}[subspace attack]
    A subspace attack is an adversarial attack
    which returns adversarial perturbations
    in a predefined subspace $\sV\subseteq\sX$.
\end{definition}


Intuitively, the predefined subspace $\sV$ can be seen as a prior
for perturbations of adversarial examples.
If the subspace is small enough while still being able to capture most of small adversarial perturbations,
then due to the reduced dimensionality,
it can significantly reduce the number of queries required
for black-box attacks.

We will focus on ``one type'' of black-box attacks,
the \emph{random attack},
which captures a wide range of (nearly all existing) black-box attacks,
and is convenient to incorporate the prior knowledge about the subspace,
and thus easy to be transformed into a subspace attack.
Examples of random attacks will be shown
in Section~\ref{sec:case-soft} and Section~\ref{sec:case-hard}

\begin{definition}[random attack]
    The resulting adversarial perturbation of a random attack is a linear combination of random vectors.
\end{definition}

The following lemma highlights an intuition on how to transform a random attack into a subspace attack:
\begin{lemma}\label{lem:random-subspace}
    If all random vectors sampled by a random attack is constrained to be in a predefined subspace $\sV$,
    then the random attack is a subspace attack with respect to $\sV$.
\end{lemma}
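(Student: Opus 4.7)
The plan is to unfold the two definitions involved and then appeal to the single defining property of a linear subspace, namely closure under linear combination. Concretely, let $\vdelta$ denote the resulting adversarial perturbation of the random attack. By the definition of random attack, $\vdelta$ can be written as a linear combination $\vdelta = \sum_{i} \alpha_i \vr_i$ of random vectors $\vr_1, \vr_2, \ldots$ with scalar coefficients $\alpha_i \in \sR$.

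Next I would invoke the hypothesis that each sampled $\vr_i$ lies in the predefined subspace $\sV \subseteq \sX$. Since $\sV$ is a linear subspace, it is closed under scalar multiplication and under (finite) vector addition, so any finite linear combination of elements of $\sV$ again belongs to $\sV$. Applying this closure property to $\sum_i \alpha_i \vr_i$ yields $\vdelta \in \sV$. By the definition of subspace attack (an attack whose returned perturbation lies in the predefined subspace $\sV$), this is exactly what is required, so the random attack is a subspace attack with respect to $\sV$.

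There is no real obstacle here: the lemma is essentially a restatement of the subspace closure axiom once the definitions are unpacked. The only minor subtlety I would flag, for completeness, is that ``random vector'' should be interpreted as a (random) element of $\sX$ rather than as a probability distribution, so that the set-membership statement $\vr_i \in \sV$ is meaningful pointwise (or almost surely, if one prefers a measure-theoretic reading); and the linear combination should be understood as finite, which matches how the random-attack iterations are actually performed in the algorithms referenced in Sections~\ref{sec:case-soft} and~\ref{sec:case-hard}. With these conventions fixed, the argument reduces to one line of subspace closure.
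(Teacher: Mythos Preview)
Your proposal is correct and matches the paper's own proof, which is a single sentence: ``a linear combination of vectors in a subspace is also in the subspace.'' Your additional remarks about pointwise interpretation and finiteness are reasonable clarifications but go beyond what the paper records.
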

The proof is straightforward: 
a linear combination of vectors in a subspace
is also in the subspace.


Random vectors of random attacks are typically sampled from isometric distributions:
all elements of the random vector are independent and identically distributed.
Typical examples of these distributions include the isometric Gaussian distribution
and the Rademacher distribution (uniform over $\{\pm 1\}$).
Let $\mathtt{sample}(d)$ denote the sampling routine
for such a random vector with the dimension $d\in\sN^+$.
(Thus $\mathtt{sample}(D)$ will sample a random vector in the original input space $\sX$.)
It follows that if we could constrain the sampling routine in a subspace,
by Lemma~\ref{lem:random-subspace} the resulting attack would be a subspace attack.
Specifically, Algorithm~\ref{alg:sample} displays how to sample a random vector in a subspace.
The subspace $\sV$ is characterized by an orthonormal basis
(see Section~\ref{sec:spanning} for details on how to derive the orthonormal basis),
and the term $\sqrt{\frac{D}{M}}$ guarantees that the returned random vector
has the same expected length as the original random vector $\mathtt{sample(D)}$.

\begin{algorithm}[tb]
    \SetAlgoLined
    \KwIn{orthonormal vectors $\ve_1, \ve_2,\ldots, \ve_M \in \sX$ which spans the subspace $\sV$,
        and the sampling routine \texttt{sample} for isometric random vectors}
    \KwOut{a random vector in the subspace $\sV$}
    $\vw \leftarrow \mathtt{sample}(M)$ \\
    \KwRet{$ \sqrt{\frac{D}{M}} \sum_{i=1}^M w_i \ve_i$}
    \caption{Random vectors in a subspace}
    \label{alg:sample}
\end{algorithm}

Note that the returned random vector of Algorithm~\ref{alg:sample} is a linear combination of the orthonormal vectors.
Therefore we have the following lemma:
\begin{lemma}
    The returned random vector of Algorithm~\ref{alg:sample}
    is constrained in the subspace $\sV = \operatorname{span}(\ve_1, \ve_2,\ldots, \ve_M)$,
    where $\operatorname{span}(\ve_1, \ve_2,\ldots, \ve_M)$
    returns the smallest space $\sV$ that contains
    all the input vectors $\ve_1$, $\ve_2$, $\ldots$, $\ve_M$.
\end{lemma}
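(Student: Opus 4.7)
The plan is to invoke the definition of the span directly, since the algorithm constructs its output as an explicit linear combination of the basis vectors. First I would write out the return value of Algorithm~\ref{alg:sample} as
\[
\vu \;=\; \sqrt{\tfrac{D}{M}} \sum_{i=1}^{M} w_i \ve_i,
\]
where the scalars $w_i$ are the components of $\vw = \mathtt{sample}(M)$. Setting $\alpha_i := \sqrt{D/M}\, w_i$, one sees immediately that $\vu = \sum_{i=1}^M \alpha_i \ve_i$ is a linear combination of $\ve_1, \ldots, \ve_M$ with real coefficients.

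Second, I would invoke the standard characterization of $\operatorname{span}(\ve_1, \ldots, \ve_M)$ as the set of all linear combinations of those vectors, which is equivalently the smallest subspace of $\sX$ that contains them. Since $\vu$ is such a linear combination, $\vu \in \sV$, which is exactly the claim. Note that one does not even need the orthonormality hypothesis on $\ve_1, \ldots, \ve_M$ for this containment to hold; orthonormality is used only for the expected-norm calibration mentioned in the text preceding the algorithm, and is irrelevant to the subspace membership.

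The only subtlety worth flagging is that the conclusion holds for every realization of the random sample $\vw$, not merely almost surely, because the argument is purely algebraic and does not depend on the distribution used inside $\mathtt{sample}$. Similarly, the normalizing factor $\sqrt{D/M}$ plays no role here; it only rescales the coefficients. There is no real obstacle: the lemma is essentially a restatement of the definition of span, and its purpose is to make the connection to Lemma~\ref{lem:random-subspace} explicit, so that any random attack whose sampling routine is replaced by Algorithm~\ref{alg:sample} is automatically a subspace attack with respect to $\sV$.
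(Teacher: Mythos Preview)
Your proposal is correct and matches the paper's own argument exactly: the paper simply observes, in the sentence preceding the lemma, that the output of Algorithm~\ref{alg:sample} is a linear combination of the orthonormal vectors $\ve_1,\ldots,\ve_M$, hence lies in their span. Your write-up is a more detailed version of this same one-line observation, and the extra remarks you add (orthonormality and the scaling factor being irrelevant for containment, the conclusion holding for every realization of $\vw$) are all accurate refinements.
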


Therefore, by applying Algorithm~\ref{alg:sample} to any random attack,
we have a subspace attack as the following corollary implies:
\begin{corollary} \label{cor:transform}
    Given a set of orthonormal vectors $\ve_1$, $\ve_2$, $\ldots$, $\ve_M \in \sX$,
    any random attack using isometric random vectors
    can be transformed into a subspace attack
    with the corresponding subspace
    $\sV = \operatorname{span}(\ve_1, \ve_2,\ldots, \ve_M)$
    by means of replacing the sampling routine \texttt{sample}($D$)
    via Algorithm~\ref{alg:sample}.
\end{corollary}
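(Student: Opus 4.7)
The plan is to obtain this corollary as a direct composition of two facts that have already been set up: the lemma characterizing the output of Algorithm~\ref{alg:sample}, and Lemma~\ref{lem:random-subspace}. By the definition of a random attack, its returned perturbation is already a linear combination of the random vectors it samples, so the entire argument reduces to verifying that, after the substitution, every such sampled vector lies in $\sV$.

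First I would unfold the substitution. Once $\mathtt{sample}(D)$ is replaced by the routine in Algorithm~\ref{alg:sample}, every random vector fed to the attack is of the form $\sqrt{D/M}\sum_{i=1}^M w_i \ve_i$, which is manifestly a linear combination of $\ve_1,\ldots,\ve_M$ and hence, by the preceding lemma, belongs to $\sV = \operatorname{span}(\ve_1,\ldots,\ve_M)$. Second, since all sampled vectors now lie in $\sV$, Lemma~\ref{lem:random-subspace} applies verbatim and concludes that the resulting attack is a subspace attack with respect to $\sV$. Step one is purely definitional unpacking; step two is a direct appeal.

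The main item to think about is not analytical but definitional: the substitution draws an $M$-dimensional isometric $\vw$ rather than a $D$-dimensional isometric vector, and inserts the scaling $\sqrt{D/M}$ only to preserve the expected norm. One should therefore check that the substituted procedure is still a \emph{random attack} in the sense of the paper's definition, i.e., that its output remains a linear combination of random vectors. This is immediate, since the definition places no restriction on the joint distribution of those vectors beyond their role as linear-combination ingredients. Hence the substitution is admissible, the subspace-containment hypothesis of Lemma~\ref{lem:random-subspace} is satisfied, and the corollary follows without any further computation.
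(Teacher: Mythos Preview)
Your proposal is correct and matches the paper's own reasoning: the paper does not give a separate proof of this corollary but presents it as an immediate consequence of Lemma~\ref{lem:random-subspace} together with the lemma stating that Algorithm~\ref{alg:sample} outputs vectors in $\operatorname{span}(\ve_1,\ldots,\ve_M)$, which is exactly the two-step composition you spell out. Your additional remark checking that the substituted procedure still fits the definition of a random attack is a harmless clarification beyond what the paper bothers to say.
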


Corollary~\ref{cor:transform} introduces a particular method
to transform a random attack (black-box attack) into a subspace attack.
It is noteworthy that the transformation is performed 
only by means of replacing sampling routines.
It \emph{does not require to project adversarial perturbations
    from the input space $\sX$ to the subspace $\sV$
    explicitly},
and therefore causes as little negative impact as possible 
on the original random attack.

\subsubsection{Case study: soft-label black-box attacks} \label{sec:case-soft}

We investigate a soft-label black-box attack framework
within which the attack is composed of a gradient-based optimization method
and a \emph{backend} zeroth-order gradient estimation method.
This framework
is summarized in Algorithm~\ref{alg:soft},
and captures a wide range of soft-label black-box methods
\citep{ilyas2018black,liu2018signsgd,uesato2018adversarial,tu2019autozoom,cheng2019improving}.

\begin{algorithm}[th]
    \SetAlgoLined
    \KwIn{score function $f: \sX\rightarrow\sR^C$,
        and corresponding classifier $c:\sX\rightarrow\sY$,
        correctly-classified labeled instance $(\vx, y) \in \sX\times\sY$,
        and budget $B\in \sN^+$}
    \KwOut{adversarial perturbation $\vdelta$ or NULL}
    $\vdelta \leftarrow \texttt{initializer()}$ \\
    \While{$B > 0$}{
        \If{$c(\vx+\vdelta)\neq y$}{
            \KwRet{$\vdelta$} \tcp{successful}
        }
        $\vg \leftarrow \mathtt{gradient\_estimator}(f, \vx+\vdelta, y$) \\
        $\vdelta \leftarrow \mathtt{gradient\_based\_optimizer}(\vdelta, \vg)$ \\
        $B \leftarrow B - \mathtt{query\_sum\_within\_the\_iteration()}$
    }
    \KwRet{NULL} \tcp{failed}
    \caption{Soft-label black-box attack framework}
    \label{alg:soft}
\end{algorithm}

In this framework, random vectors could be introduced when
initializing the perturbation
(the all-zero vector or a random vector)
and estimating gradients
by the zeroth-order method.
A typical example of estimating gradients
is the random gradient-free (RGF) method~\citep{nesterov2017random},
which returns the estimated gradient in the form below:
\begin{align*}
    \hat{\vg} = \sum_i \frac{\ell_f(\vx + \sigma \vu_i, y) - \ell_f(\vx, y)}{\sigma} \vu_i,
\end{align*}
where $\vu_i$s are \emph{unit} Gaussian random vectors (Gaussian random vectors of length 1).
Therefore, $\hat{\vg}$ is a linear combination of random vectors.

Then, the resulting adversarial perturbation
is calculated by gradient-based optimization methods
such as projected gradient descent~\citep{madry2018towards},
all of which return linear combinations of the estimated gradients.
It follows that these attacks are random attacks and
could be easily transformed into a subspace attack via Algorithm~\ref{alg:sample}.

\subsubsection{Case study: hard-label black-box attacks} \label{sec:case-hard}

Hard-label black-box attacks could be separated into two categories:
methods based on random walks~\citep{brendel2018decision,chen2019hopskipjumpattack}
and methods based on direction estimation~\citep{cheng2019query,cheng2020signopt}.
In the first case, a random walk
consists of a succession of random vectors, i.e., the sum of random vectors;
in the second case, the gradient with respect to the direction towards the boundary
is estimated by RGF or its variant based on the sign of the finite difference.
As we discuss before, these gradient estimation methods typically return linear combinations of random vectors.
In both cases, the resulting adversarial perturbation is also a linear combination of random vectors,
and as a consequence they could also be transformed into subspace attacks obviously.

\subsection{Spanning attack}\label{sec:spanning}

The subspace $\sV$ is a prior for the subspace attack.
To make a subspace attack perform well,
it has to be easier to find an adversarial perturbation
in the subspace $\sV$ than in the original input space $\sX$.
The crux of the subspace attack is how to locate an appropriate subspace $\sV$.
We propose to utilize an auxiliary unlabeled dataset to span the subspace,
which is motivated by the theoretical analysis
regarding the \emph{minimum adversarial perturbation} as below.

The minimum adversarial perturbation is the adversarial perturbation
with the minimum norm.
Formally, given a classifier $c: \sX \rightarrow \sY$ and
a labeled instance $(\vx, y)\in\sX\times\sY$,
the minimum adversarial perturbation is defined as
\begin{align*}
    \vdelta^* = \argmin_{\vdelta}\ \lVert \vdelta \rVert \ \ \text{s.t.} \ c(\vx+\vdelta) \neq y.
\end{align*}


Let $\sD = \{(\vx_i, y_i)\}_{i=1}^N$ be the training dataset,
and $\sD_\sX = \{\vx_i\}_{i=1}^N$ be the training instances without labels.
We have the following theorem on the minimum adversarial perturbation
of the $K$-nearest neighbor classifier ($K$-NN):

\begin{theorem} \label{thm:nn-span}
For every $(\vx, y) \in \sX \times \sY$, there exists $\vw \in \R^N$ such that
the minimum adversarial perturbation of $K$-NN satisfies
\begin{align*}
    \vdelta^* = \sum_{i=1}^N w_i \vx_i.
\end{align*}
In other words, the minimum adversarial perturbation of $K$-NN
is in the subspace $\operatorname{span}(\sD_\sX)$.
\end{theorem}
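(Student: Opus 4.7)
The plan is to exploit the fact that a $K$-NN classifier's decision at a point depends only on the \emph{ordering} of distances to the training instances, and to show that the orthogonal-to-$\operatorname{span}(\sD_\sX)$ component of any perturbation has no effect on that ordering.

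First I would set $\sV = \operatorname{span}(\sD_\sX)$ and decompose $\vx = \vx_\parallel + \vx_\perp$ and, for an arbitrary perturbation $\vdelta$, write $\vdelta = \vdelta_\parallel + \vdelta_\perp$, where the parallel parts lie in $\sV$ and the perpendicular parts lie in $\sV^\perp$. Since each $\vx_i$ belongs to $\sV$, Pythagoras gives
\begin{equation*}
\lVert \vx + \vdelta - \vx_i \rVert^2 = \lVert \vx_\parallel + \vdelta_\parallel - \vx_i \rVert^2 + \lVert \vx_\perp + \vdelta_\perp \rVert^2
\end{equation*}
for every $i$. The second summand is independent of $i$, so it cancels from every pairwise comparison of distances. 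Consequently the set of $K$ nearest training instances, and hence the $K$-NN prediction at $\vx + \vdelta$, coincides with that at $\vx + \vdelta_\parallel$.

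Next I would combine this with the minimality of $\vdelta^*$. If $\vdelta^*$ is an adversarial perturbation, then by the observation above $\vdelta_\parallel^*$ induces the same misclassification, so it is also adversarial. But $\lVert \vdelta_\parallel^* \rVert^2 = \lVert \vdelta^* \rVert^2 - \lVert \vdelta_\perp^* \rVert^2 \le \lVert \vdelta^* \rVert^2$, with equality iff $\vdelta_\perp^* = \vzero$. Minimality of $\lVert \vdelta^* \rVert$ therefore forces $\vdelta_\perp^* = \vzero$, i.e., $\vdelta^* \in \sV$. Since $\sV = \operatorname{span}(\vx_1,\ldots,\vx_N)$, there exist coefficients $w_1,\ldots,w_N$ with $\vdelta^* = \sum_{i=1}^N w_i \vx_i$, which is precisely the claim.

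There is no serious obstacle here; the proof is essentially the orthogonal-decomposition argument above. The only thing to be mildly careful about is that the minimum adversarial perturbation need not be unique, so the statement should be read as: any minimizer $\vdelta^*$ lies in $\operatorname{span}(\sD_\sX)$, which is what the decomposition argument delivers. A minor point worth flagging is how to break ties among equal distances when selecting the $K$ nearest neighbors; any deterministic tie-breaking rule that depends only on the distance values (and not on the perpendicular coordinates) preserves the argument, and this is the standard assumption in the $K$-NN literature.
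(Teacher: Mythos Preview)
Your proof is correct and takes a genuinely different route from the paper's. The paper fixes a candidate $K$-neighbor set $\sT\subseteq\sD_\sX$, writes the minimum perturbation forcing $\sT$ to be the nearest-neighbor set as a convex quadratic program with linear constraints $\mA\vdelta\le\vb$ (each row of $\mA$ being a difference $\vx^--\vx^+$ of training points), and then invokes the primal--dual relationship to conclude $\vdelta^*_\sT=-\mA^\top\vlambda^*_\sT\in\operatorname{span}(\sD_\sX)$; finally it observes that the global minimizer $\vdelta^*$ must equal $\vdelta^*_\sT$ for some $\sT$. By contrast, you bypass optimization theory entirely: the orthogonal decomposition plus Pythagoras shows directly that the $\sV^\perp$-component of any perturbation is invisible to the $K$-NN decision rule, so projecting onto $\sV$ preserves adversariality while not increasing the norm. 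Your argument is more elementary (no KKT or duality needed), avoids the enumeration over $\sT$, and sidesteps the mild subtlety in the paper's last step of identifying $\vdelta^*$ with some $\vdelta^*_\sT$ at boundary configurations. The paper's approach, on the other hand, yields an explicit representation of the coefficients via dual variables, though this is not used downstream.
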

\begin{proof}
    Given $(\vx, y) \in \sX \times \sY$ and $\sT \subseteq \sD_\sX$ with $\lvert \sT \rvert = K$,
    consider to add a small perturbation $\vdelta \in \sX$
    such that $\sT$ is the $K$ nearest neighbors of $\vx + \vdelta$.
    This problems could be formalized as the following optimization problem:
    \begin{align*} 
    \begin{aligned}
      \min_{\vdelta} \ \
       & \lVert \vdelta \rVert                                                       \\
      \text{s.t.} \
       & \lVert \vx + \vdelta - \vx^+ \rVert \leq \lVert \vx + \vdelta - \vx^-\rVert \\
       & \forall \vx^+ \in \sT,\ \forall \vx^- \in \sD_\sX - \sT.
    \end{aligned}
  \end{align*}
  It is equivalent to the following problem:
  \begin{align*}
    \begin{aligned}
      \min_{\vdelta} \ \
       & \frac{1}{2} \vdelta^\top \vdelta                                                                         \\
      \text{s.t.} \
       & (\vx^- - \vx^+)^\top  \vdelta \leq \frac{1}{2} (\lVert\vx - \vx^-\rVert^2 - \lVert \vx - \vx^+ \rVert^2) \\
       & \forall \vx^+ \in \sT,\ \forall \vx^- \in \sD_\sX - \sT.
    \end{aligned}
  \end{align*}
The constraints could be rewritten in the matrix form:
\begin{align*} 
    \begin{aligned}
      \min_{\vdelta} \ \
       & \frac{1}{2} \vdelta^\top \vdelta \\
      \text{s.t.} \
       & \mA \vdelta \leq \vb.
    \end{aligned}
  \end{align*}
 Obviously, it is a convex quadratic programming problem.
 Let $\vdelta^*_{\sT}$ and $\vlambda^*_{\sT}$ be the optimal points
 of the primal problem and the dual problem respectively.
 By the primal-dual relationship, we have
 \begin{align*}
     \vdelta^*_\sT = -\mA^\top \vlambda^*_\sT.
 \end{align*}
 Considering the form of $\mA$,
 it is obvious that $\vdelta^*_\sT$ is a linear combination of instances in $\sD_\sX$.
 
 Note that the minimum adversarial perturbation $\vdelta^*$ of $K$-NN
 has to be $\vdelta^*_\sT$ for a certain $\sT$.
 Therefore, $\vdelta^*$ has to be in the subspace $\operatorname{span}(\sD_\sX)$.
\end{proof}





Similar results on the minimum adversarial perturbation also hold for
support vector machine (SVM) classifiers~\citep{cortes1995support} as follows:
\begin{theorem} \label{thm:svm-span}
For every $(\vx, y) \in \sX \times \sY$, there exists $\vw \in \R^N$ such that
the minimum adversarial perturbation of SVM satisfies
\begin{align*}
    \vdelta^* = \sum_{i=1}^N w_i \vx_i.
\end{align*}
In other words, the minimum adversarial perturbation of SVM
is also in the subspace $\operatorname{span}(\sD_\sX)$.
\end{theorem}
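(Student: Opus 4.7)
The plan is to parallel the K-NN argument by reformulating the minimum adversarial perturbation problem as a (family of) convex quadratic program(s) whose constraint rows lie in $\operatorname{span}(\sD_\sX)$. The key ingredient is the classical representer property of SVM: for a standard SVM trained on $\sD$, each class score takes the linear form $f_k(\vx) = \vw_k^\top \vx + b_k$ with $\vw_k = \sum_{i=1}^N \alpha_{k,i} \vx_i$ by the KKT conditions of the dual, so $\vw_k \in \operatorname{span}(\sD_\sX)$; consequently every pairwise normal $\vw_{k'} - \vw_k$ also lies in $\operatorname{span}(\sD_\sX)$.

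Next, for each candidate target label $k \neq y$ I would write the set of perturbations $\vdelta$ with $c(\vx+\vdelta) = k$ as the polyhedron
\begin{align*}
    (\vw_{k'} - \vw_k)^\top(\vx+\vdelta) \leq b_k - b_{k'}, \ \forall k' \neq k,
\end{align*}
so that the label-$k$ restriction of the minimum adversarial perturbation is the convex QP
\begin{align*}
    \min_{\vdelta}\ \tfrac{1}{2}\vdelta^\top\vdelta \ \ \text{s.t.} \ \ \mA_k \vdelta \leq \vb_k,
\end{align*}
in which the rows of $\mA_k$ are differences of $\vw_{k'}$'s and therefore lie in $\operatorname{span}(\sD_\sX)$. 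Applying the primal-dual relationship verbatim as in the proof of Theorem~\ref{thm:nn-span} yields $\vdelta^*_k = -\mA_k^\top \vlambda^*_k$, a linear combination of the rows of $\mA_k$, and hence in $\operatorname{span}(\sD_\sX)$. Taking the shortest such $\vdelta^*_k$ over $k \neq y$ returns the global minimizer $\vdelta^*$, which then remains in the span.

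The main obstacle I anticipate is uniform handling of the several SVM variants. The binary case is immediate because only a single separating hyperplane is involved. For multiclass SVM (one-vs-rest, one-vs-one, Crammer--Singer), the decision region for a non-ground-truth label $k$ may itself be a finite union of polyhedra, so one has to enumerate finitely many convex QPs and verify that each yields an in-span optimum before taking the overall minimum. A secondary subtlety is the kernelized setting, where the weights live in a feature space induced by a nonlinear map $\phi$; there the argument applies in feature space, and the statement as written is naturally read in the linear representation in which the training instances themselves span the relevant subspace.
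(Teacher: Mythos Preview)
Your proposal is correct, and it shares with the paper the indispensable ingredient: the representer property of SVM, which places each weight vector $\vw_k$ (and hence every pairwise difference $\vw_{k'}-\vw_k$) in $\operatorname{span}(\sD_\sX)$. Where you diverge is in how you finish. The paper restricts to the binary case and argues purely geometrically: since the decision depends only on the sign of $\langle\vw,\vx+\vdelta\rangle=\langle\vw,\vx\rangle+\lVert\vw\rVert\lVert\vdelta\rVert\cos\theta$, the shortest sign-flipping perturbation is collinear with $\vw$, and $\vw\in\operatorname{span}(\sD_\sX)$ by the representer property; the multiclass case is then dismissed by appeal to one-vs-one or one-vs-rest reductions. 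You instead recast the problem, for each candidate label $k$, as a convex QP with constraint matrix $\mA_k$ whose rows are differences of the $\vw_{k'}$'s, and invoke the primal--dual identity $\vdelta^*_k=-\mA_k^\top\vlambda^*_k$ exactly as in the $K$-NN proof. Your route is less elementary but buys uniformity: it mirrors the $K$-NN argument step for step and treats the multiclass case directly rather than by reduction. The paper's route is shorter and makes the geometry transparent in the binary case, at the cost of leaving the multiclass extension as a remark. Your anticipated obstacle about enumerating polyhedra for one-vs-one or one-vs-rest is real but harmless, since each piece is still a QP with rows in the span; your caveat about kernelized SVM is also appropriate, as the statement is most naturally read in the linear setting.
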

\begin{proof}
    For simplicity, we only consider the binary case,
    which can be easily extended to the multi-class case
    by strategies such as one-vs-one and one-vs-rest.
    Let $\vw^*$ be the optimal solution of SVM.
    Based on the primal-dual relationship, we have
    \begin{align*}
        \vw^* = \sum_{i=1}^N \alpha_i \vx_i
    \end{align*}
    for some $\valpha \in \R^N$.
    When predicting a perturbed instance, SVM calculates
    \begin{align*}
        \langle\vw, \vx + \vdelta\rangle
         & = \langle\vw,\vx\rangle + \langle\vw,\vdelta\rangle                         \\
         & = \langle\vw,\vx\rangle + \lVert\vw\rVert \lVert\vdelta\rVert \cos(\theta),
    \end{align*}
    where $\theta$ is the angle between $\vw$ and $\vdelta$.
    Therefore, the minimum adversarial perturbation
    that flips the sign of $\langle\vw,\vx\rangle$
    has to be in the direction of $\vw$ with $\cos(\theta)=1$
    or in the opposite direction of $\vw$ with $\cos(\theta)=-1$.
\end{proof}

It is inspiring that $K$-NN and SVM
are very different whereas share the same property:
\begin{center}
    \emph{``Minimum adversarial perturbations prove to be \\
    in the subspace spanned by the training data.''}
\end{center}
Consequently, Theorem~\ref{thm:nn-span} and Theorem~\ref{thm:svm-span} 
motivate us to \emph{search for adversarial perturbations
in the space $\operatorname{span}(\sD_\sX)$},
which is the theoretical foundation of our spanning attack.
Nevertheless, before diving into details of the spanning attack,
it is worth mentioning that
computing minimum adversarial perturbations
for neural networks and tree-based ensemble models
has shown to be NP-hard~\citep{katz2017reluplex,kantchelian2016evasion},
and it is an open problem in what conditions minimum adversarial perturbations
for neural networks and tree-based ensemble models
are also in the space $\operatorname{span}(\sD_\sX)$.

In practice, it is not reasonable to assume the training data are
available to attackers.
To make a relaxation, we assume that the attacker only has access to
an auxiliary unlabeled dataset $\sS$.
By this means, subspace attackers search for adversarial perturbations
in $\operatorname{span}(\sS)$,
namely the \textbf{\emph{spanning attack}}, i.e.,
\emph{the subspace attack by spanning an auxiliary unlabeled dataset}.
For convenience, we simply term
the auxiliary unlabeled dataset as the \emph{subspace dataset}.

By Corollary~\ref{cor:transform},
given a subspace dataset $\sS$,
the spanning attack requires a set of orthonormal vectors which is a basis for $\operatorname{span}(\sS)$
so as to transform a random attack into a subspace attack.
We could make it by the standard process of \emph{orthonormalization},
which can be performed by the {Gram-Schmidt process},
the {Householder transformation} etc~\citep{cheney2010linear}.
Therefore, the overall procedures of our spanning attack is as below
(also shown in Figure~\ref{fig:workflow}):
\begin{enumerate}
    \item Compute a basis of $\sS$ by orthonormalization;
    \item Transform the random attack into a subspace attack by Algorithm~\ref{alg:sample};
    \item Attack the target model with the resulting subspace attack.
\end{enumerate}

\subsection{Selective spanning attack} \label{sec:selective}

In this section, we talk about an extension of the spanning attack.
The spanning attack searches for adversarial perturbations
in the space $\operatorname{span}(\sS)$,
which is a subspace of the input space $\sX$.
A natural question is
\emph{whether it is possible to benefit more by means of
    explicitly selecting a subspace of $\operatorname{span}(\sS)$
    instead of using $\operatorname{span}(\sS)$ directly}.
We term the method which searches for adversarial perturbations
in a non-trivial subspace
of $\operatorname{span}(\sS)$ as the \emph{selective spanning attack},
as it selects a subspace from $\operatorname{span}(\sS)$.

In the case of the selective spanning attack,
the Gram-Schmidt process or Householder transformation is not instructive 
to select a subspace of $\operatorname{span}(\sS)$,
since there is no significant difference
among the derived orthonormal vectors.

Instead, we employ the singular value decomposition (SVD) to derive a set of orthonormal vectors
which is a basis of $\operatorname{span}(\sS)$.
In particular,
assume the subspace dataset has $M^\prime$
different instances
$\sS = \{\vx_1, \vx_2,\ldots, \vx_{M^\prime}\}$ and
$\mS \in \sR^{M^\prime\times D}$ is the matrix of which the $j$-th row is $\vx_j^\intercal$.
(We use $M^\prime$ here because $M$ denotes the number of orthonormal vectors 
as Algorithm~\ref{alg:sample}.)
By SVD, $\mS$ can be decomposed into the form
\begin{align*}
    \mS = \mU \mSigma \mV^\intercal,
\end{align*}
where $\mU\in\sR^{M^\prime\times M^\prime}$ and $\mV\in \sR^{D\times D}$ are orthogonal matrices
and $\mSigma \in \sR^{M^\prime\times D}$ is a diagonal matrix,
of which diagonal entries are singular values.

It could be proved that the right singular vectors (columns of $\mV$) satisfy the following property:
\begin{lemma} \label{lm:svd}
    Right singular vectors of which the corresponding singular values are larger than zero
    are an orthonormal basis for $\operatorname{span}(\sS)$.
\end{lemma}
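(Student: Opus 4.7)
The plan is to lean on two standard facts about the SVD. First, since $\mV$ is orthogonal its columns are orthonormal, so any subset of them is automatically orthonormal. Second, the number of nonzero singular values equals $\operatorname{rank}(\mS)$, which in turn equals the dimension of the row space of $\mS$; and because the rows of $\mS$ are precisely the instances in $\sS$, this row space is exactly $\operatorname{span}(\sS)$. Given these two observations, it suffices to show that the $r$ right singular vectors $\bm{v}_1,\ldots,\bm{v}_r$ corresponding to nonzero singular values all lie in $\operatorname{span}(\sS)$; a dimension count then promotes them from a linearly independent set inside $\operatorname{span}(\sS)$ to an orthonormal basis.

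First I would establish $\operatorname{span}(\sS) \subseteq \operatorname{span}(\bm{v}_1,\ldots,\bm{v}_r)$. Transposing the decomposition gives $\mS^\intercal = \mV\,\mSigma^\intercal\,\mU^\intercal$. Each instance $\vx_j$ appears as the $j$-th column of $\mS^\intercal$, so it is a linear combination of the columns of $\mV$; but because $\mSigma^\intercal$ has nonzero entries only in its first $r$ rows, only the first $r$ columns of $\mV$ can contribute. Hence $\vx_j \in \operatorname{span}(\bm{v}_1,\ldots,\bm{v}_r)$ for every $j$, yielding the inclusion.

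For the reverse inclusion, I would invoke the defining SVD identity $\mS\,\bm{v}_i = \sigma_i\,\bm{u}_i$, which implies $\mS^\intercal \mS\,\bm{v}_i = \sigma_i^2\,\bm{v}_i$. For each $i$ with $\sigma_i > 0$ this rearranges to $\bm{v}_i = \sigma_i^{-2}\,\mS^\intercal(\mS\,\bm{v}_i)$. The vector $\mS^\intercal(\mS\,\bm{v}_i)$ is a linear combination of the columns of $\mS^\intercal$, i.e., of the rows of $\mS$, so $\bm{v}_i \in \operatorname{span}(\sS)$. Combining the two inclusions with orthonormality and $\dim \operatorname{span}(\sS) = r$, the $r$ vectors $\bm{v}_1,\ldots,\bm{v}_r$ form an orthonormal basis of $\operatorname{span}(\sS)$.

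I do not expect a genuine obstacle; this is a textbook consequence of the SVD. The only point requiring care is bookkeeping the orientation: because the paper stacks instances as the \emph{rows} of $\mS$ (not the columns), $\operatorname{span}(\sS)$ is the row space of $\mS$, equivalently $\operatorname{range}(\mS^\intercal)$, which is why the \emph{right} singular vectors (and not the left) are the relevant ones.
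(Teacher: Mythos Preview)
Your proposal is correct and follows essentially the same route as the paper: establish the two inclusions $\operatorname{span}(\sS)\subseteq\operatorname{span}(\bm{v}_1,\ldots,\bm{v}_r)$ and $\operatorname{span}(\bm{v}_1,\ldots,\bm{v}_r)\subseteq\operatorname{span}(\sS)$ directly from the SVD relations, with orthonormality inherited from $\mV$. The only cosmetic differences are that the paper works with the compact SVD and reads off $\mV_M^\intercal=\mSigma_M^{-1}\mU_M^\intercal\mS$ for the reverse inclusion, whereas you pass through the eigenvector identity $\mS^\intercal\mS\,\bm{v}_i=\sigma_i^2\bm{v}_i$; both yield the same conclusion.
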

\begin{proof}
    Let ${M}$ denote the number of non-zero singular values.
    Then, we have the compact SVD as
    \begin{align*} \label{eq:compact}
        \mS = \mU_{M} \mSigma_{M} \mV_{M}^\intercal.
    \end{align*}
    Let $\ve_i$ denote the $i$-th column of $\mV_{M}$.
    The objective is to prove
    \begin{align*}
        \operatorname{span}(\sS) = \operatorname{span}(\ve_1, \ve_2, \ldots, \ve_{M}),
    \end{align*}
    which is equivalent to
    \begin{itemize}
        \item (i) $\operatorname{span}(\sS) \subseteq \operatorname{span}(\ve_1, \ve_2, \ldots, \ve_{M})$ and
        \item (ii) $\operatorname{span}(\ve_1, \ve_2, \ldots, \ve_{M}) \subseteq \operatorname{span}(\sS)$.
    \end{itemize}
    
    For any $\va\in\operatorname{span}(\sS)$,
    by definition there exists $\vb\in\sR^{M^\prime}$ such that
    \begin{align*}
        \va = \vb^\intercal \mS = (\vb^\intercal\mU_{M}\mSigma_{M}) \mV_{M}^\intercal.
    \end{align*}
    Therefore, we have (i)
    $\operatorname{span}(\sS) \subseteq \operatorname{span}(\ve_1, \ve_2, \ldots, \ve_{M})$.
    
    By the compact SVD, we also have
    \begin{align*}
        \mSigma^{-1}_{M} \mU_{M}^\intercal \mS = \mV_{M}^\intercal.
    \end{align*}
    Thus, for any $\va\in\operatorname{span}(\ve_1, \ve_2, \ldots, \ve_{M})$,
    there exists $\vb\in\sR^{M}$ such that
    \begin{align*}
        \va = \vb^\intercal \mV_{M}^\intercal =
        (\vb^\intercal \mSigma_{M}^{-1} \mU_{M}^\intercal) \mS.
    \end{align*}
    Therefor, we have (ii)
    $\operatorname{span}(\ve_1, \ve_2, \ldots, \ve_{M}) \subseteq \operatorname{span}(\sS)$.
    
\end{proof}

We denote these right singular vectors as $\ve_1$, $\ve_2$, $\ldots$, $\ve_{M}$
with corresponding singular values larger than zero,
and they are sorted according to the corresponding singular values
such that the singular values have
$\sigma_1 \geq \sigma_2 \geq \ldots \geq \sigma_{M}$.
Then, we have roughly two options for the selective spanning attack:
selecting the top singular vectors,
and selecting the bottom singular vectors.
We term these two options as the \emph{top spanning attack}
and the \emph{bottom spanning attack} respectively.


\begin{figure}[t]
    \centering
    \subfloat[]{\label{fig:tsa}
        \includegraphics[width=.13\textwidth]{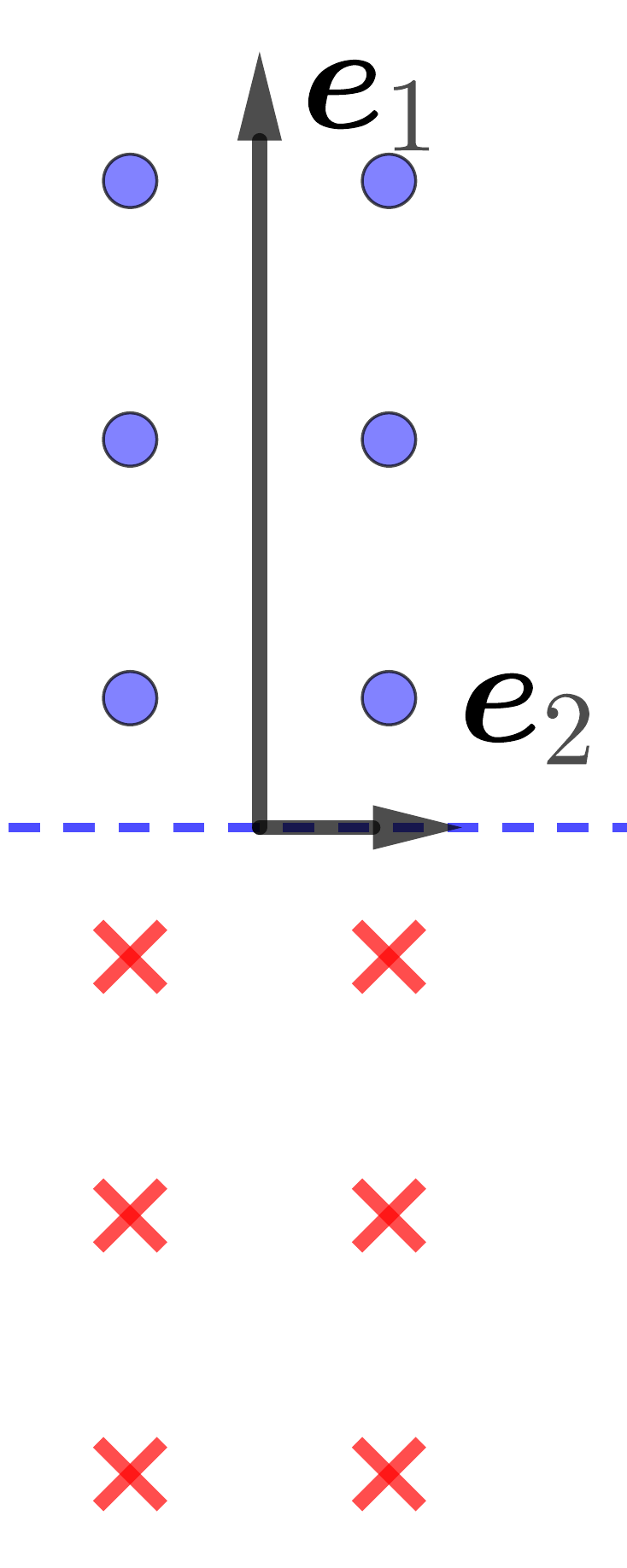}}
    \quad
    \quad
    \quad
    \quad
    \quad
    \subfloat[]{\label{fig:bsa}
        \includegraphics[width=.13\textwidth]{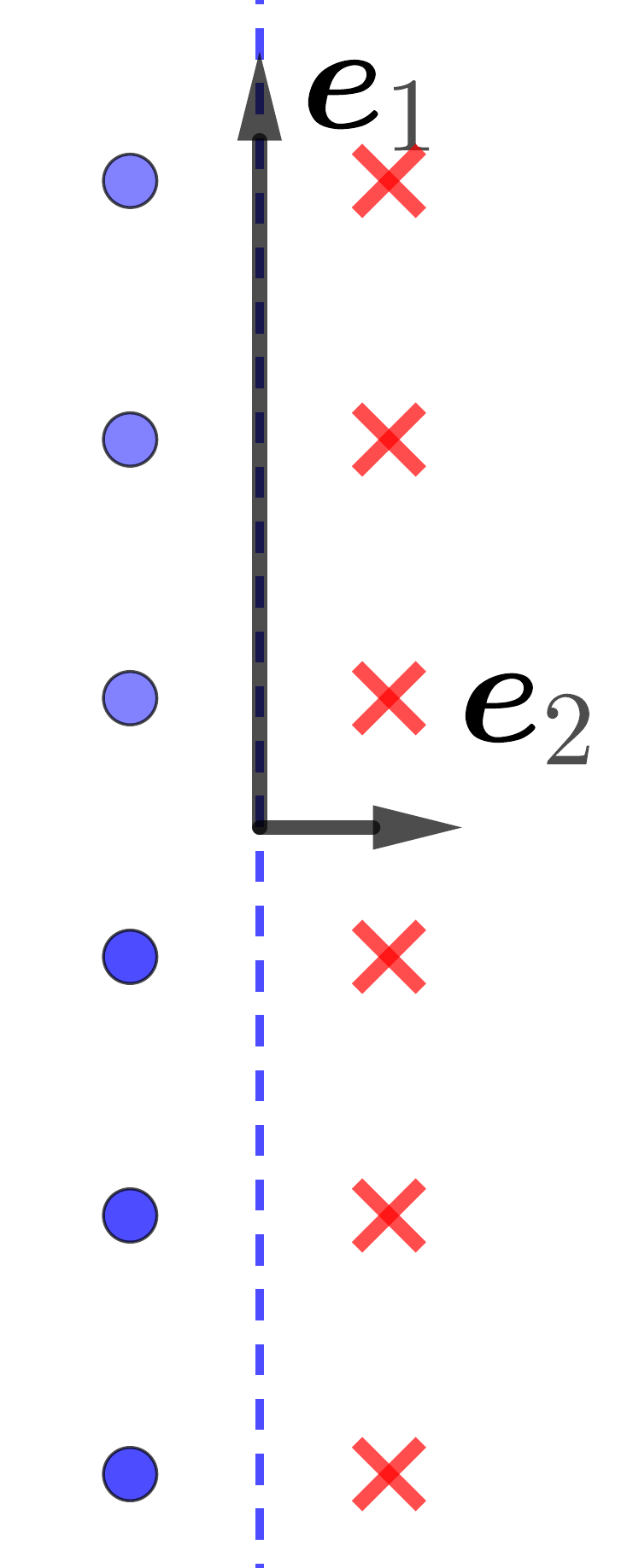}}
    
    \caption{Illustration of top and bottom spanning attacks.
        The only difference between Figure~\ref{fig:tsa} and Figure~\ref{fig:bsa} is the ground-truth labels.
        $\ve_1$ and $\ve_2$ are right singular vectors,
        and their lengths represent the corresponding singular values.
        The top spanning attack, i.e., selecting $\operatorname{span}(\ve_1)$,
        is better than the bottom spanning attack,
        i.e., selecting $\operatorname{span}(\ve_2)$, in Figure~\ref{fig:tsa};
        whereas Figure~\ref{fig:bsa} is the opposite case.
    }
    \label{fig:selective}
\end{figure}

The top spanning attack and the bottom spanning attack
have their own advantageous situations
depending on the labels of the underlying data distribution.
We illustrate two toy cases in Figure~\ref{fig:selective}.
In the first case, the top spanning attack is favorable
since we can find adversarial perturbations along $\ve_1$,
and the second case is the exact opposite.
Roughly speaking, top singular vectors represent directions along the manifold of the dataset,
and bottom singular vectors represent directions out of the manifold.
It is believed that for high-dimensional datasets
adversarial examples widely exist in directions out of the manifold \citep{stutz2019disentangling}.
Therefore, the bottom spanning attack would be a better choice in practice,
which is validated in our experiments.

\section{Experiments} \label{sec:exp}

In this section, we empirically validate the performance of
the proposed spanning attack.
Specifically, we select four representative black-box (random) attacks as baselines
and employ the spanning attack method to reinforce them:
\begin{itemize}
    \item The RGF attack~\citep{cheng2019improving}: a soft-label attack with Gaussian random vectors
          within the framework considered in the case study for soft-label attacks;
    \item The SPSA attack~\citep{uesato2018adversarial}: a soft-label attack similar to the RGF attack
    but with Rademacher random vectors instead of Gaussian random vectors;
    (In our implementation,
    SPSA has all hyper-parameters the same
    as RGF except the distribution for random vectors.)
    \item The boundary attack~\citep{brendel2018decision}: a pioneering widely-used hard-label attack based on random walks;
    \item The Sign-OPT attack~\citep{cheng2020signopt}: a state-of-the-art hard-label attack based on direction estimation.
\end{itemize}



We perform untargeted black-box attacks on the ImageNet dataset \citep{deng2009imagenet}.
Attacks are performed against the pre-trained ResNet-50~\citep{he2016deep}, VGG-16~\citep{simonyan2015very}
and DenseNet-121~\citep{huang2017densely}
from the PyTorch model zoo~\citep{steiner2019pytorch},
since these architectures are diverse and representative,
and many real-world deployed models are based on them.
Correctly-classified images
are randomly sampled from the validation set as the evaluation dataset,
of which every labeled image is the instance to attack.
The size of the evaluation dataset for soft-label attacks is 1,000,
and the one for hard-label attacks is 100 for computational efficiency.
Another 1,000 unlabeled images are sampled from the validation set as the subspace dataset.
The evaluation dataset and the subspace dataset are mutually exclusive.
We set the perturbation radius $\epsilon = \sqrt{0.001 D}$ and the budget $B = 10,000$
by convention~\citep{cheng2019improving}.
(We also study whether the radius parameter $\epsilon$ impacts performance in Section~\ref{sec:radii}.)
If an attack method finds an adversarial perturbation $\vdelta$ within $B$ queries
such that $\lVert\vdelta\rVert\leq\epsilon$ holds,
then this attack is \emph{successful};
otherwise it is failed.
Therefore, we have two criteria for a black-box attack:
(i) whether it is successful and (ii) the number of queries it executes when successful.

All hyper-parameters of the spanning attacks are the same as the corresponding baselines.
The only difference is introducing an appropriate subspace via our methods.
We refer to \citet{cheng2019improving}, \citet{uesato2018adversarial},
\citet{brendel2018decision} and \citet{cheng2020signopt} for details of
the baseline black-box methods.

\subsection{Main results}
\label{sec:main}

\begin{table}[t]
    \centering
    \caption{
        Comparison between the baseline black-box attacks and the resulting spanning attacks.
    }
    \resizebox{\linewidth}{!}{
    \begin{tabular}{@{}lllrrr@{}}
        \toprule
        & & & Success rate & Query mean & Query median \\ \midrule
        \multirow{9}{*}{ResNet-50} & \multirow{2}{*}{RGF (soft-label)}      & Baseline        & 0.971 & 589.575 &	358.0      \\
        &                                       & Spanning attack & 0.991 &	329.541 &	205.0      \\ \cmidrule{2-6}
        & \multirow{2}{*}{SPSA (soft-label)} & Baseline        & 0.972 &	584.772	& 358.0 \\
        &                                       & Spanning attack & 0.991	& 330.725	& 205.0 \\ \cmidrule{2-6}
        & \multirow{2}{*}{Boundary (hard-label)} & Baseline        & 0.720        & 4133.903   & 3291.0     \\
        &                                       & Spanning attack & 0.880        & 3197.557   & 2569.5     \\ \cmidrule{2-6}
        & \multirow{2}{*}{Sign-OPT (hard-label)} & Baseline        & 0.970        & 2392.175   & 2143.0     \\
        &                                       & Spanning attack & 1.000        & 1053.220   & 647.0      \\
        \midrule
        \multirow{9}{*}{VGG-16} & \multirow{2}{*}{RGF (soft-label)}      & Baseline        & 0.966 &	389.519 &	256.0      \\
        &                                        & Spanning attack & 0.975	& 261.335	& 154.0      \\ \cmidrule{2-6}
        & \multirow{2}{*}{SPSA (soft-label)} & Baseline        & 0.968	& 386.187	& 256.0 \\
        &                                       & Spanning attack & 0.975	& 262.905	& 154.0 \\ \cmidrule{2-6}
        & \multirow{2}{*}{Boundary (hard-label)} & Baseline        & 0.810        & 3467.086   & 2787.0     \\ 
        &                                        & Spanning attack & 0.940        & 2972.755   & 2263.0     \\ \cmidrule{2-6}
        & \multirow{2}{*}{Sign-OPT (hard-label)} & Baseline        & 1.000        & 1665.080   & 1450.0     \\ 
        &                                        & Spanning attack & 1.000        & 840.900    & 572.5      \\ 
        \midrule
        \multirow{9}{*}{DenseNet-121} & \multirow{2}{*}{RGF (soft-label)}      & Baseline        & 0.981 &	528.312 &	358.0      \\
        &                                        & Spanning attack & 0.995 &	272.043 &	154.0      \\ \cmidrule{2-6}
        & \multirow{2}{*}{SPSA (soft-label)} & Baseline        & 0.984	& 552.982	& 358.0 \\
        &                                       & Spanning attack & 0.997	& 299.941	& 154.0 \\ \cmidrule{2-6}
        & \multirow{2}{*}{Boundary (hard-label)} & Baseline        & 0.670        & 3806.687   & 3389.0     \\
        &                                        & Spanning attack & 0.890        & 3063.449   & 2261.0     \\ \cmidrule{2-6}
        & \multirow{2}{*}{Sign-OPT (hard-label)} & Baseline        & 0.980        & 2407.398   & 1863.5     \\
        &                                        & Spanning attack & 1.000        & 1014.280   & 688.5      \\ 
        \bottomrule
    \end{tabular}
    }
    \label{tab:main}
\end{table}

\begin{figure}[t]
    \centering
    \includegraphics[width=\linewidth]{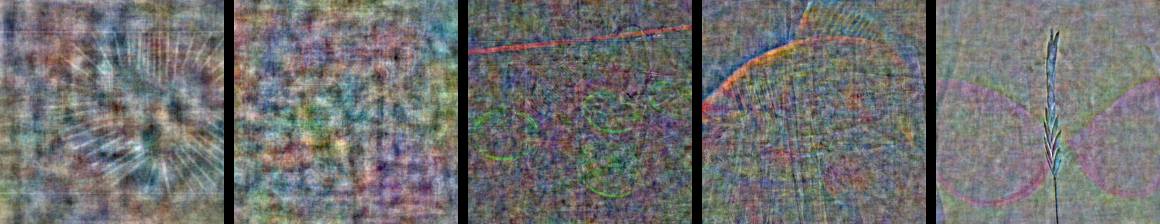}
    \caption{Visualization for vectors of the orthonormal basis.}
    \label{fig:basis}
\end{figure}

\begin{figure}[t]
    \centering
    \includegraphics[width=\linewidth]{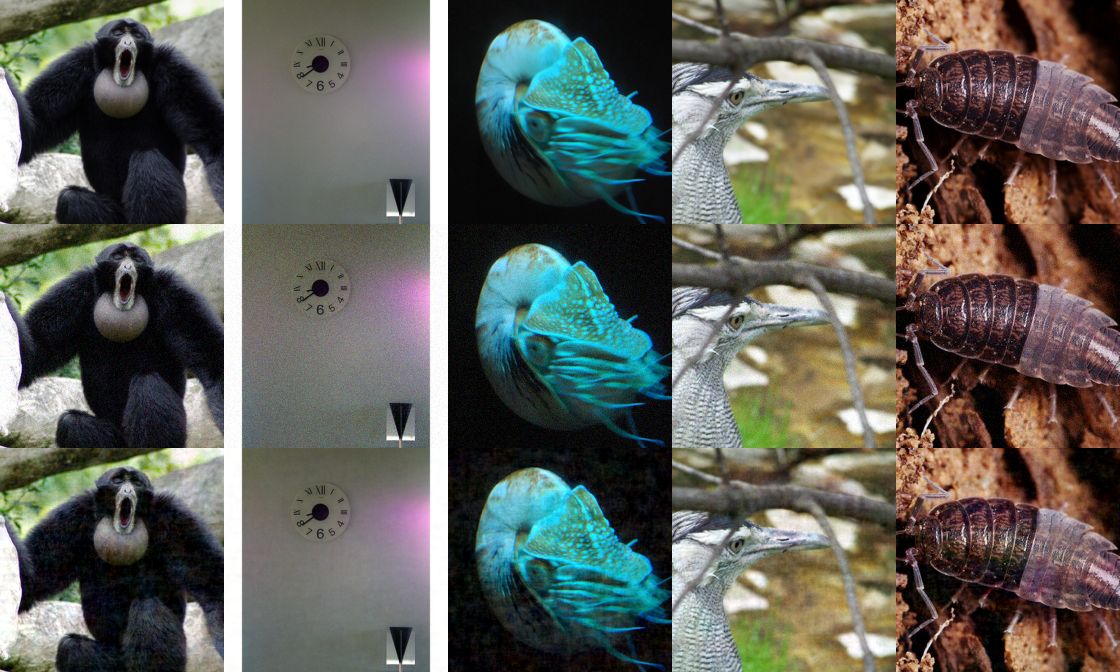}
    \caption{Examples of the adversarial images.
        The first row is the original images;
        the second row is the adversarial images crafted by the baseline attack (Sign-OPT against ResNet-50);
        the third row is the adversarial images crafted by the corresponding spanning attack.}
    \label{fig:examples}
\end{figure}


Success rates, query means and query medians on the evaluation dataset are reported in
Table~\ref{tab:main}.
By convention only successful adversarial perturbations are counted for query means and query medians.
On the one hand, this criterion is favorable for the method
with a lower success rate and a lower query number on successful adversarial perturbations.
On the other hand and more importantly,
if a method has a higher success rate and a lower query number on successful perturbations than the other,
we would have sufficient confidence to conclude that the first method performs better.

Our results show that the spanning attack method reinforces
the baselines significantly
in terms of both success rates and query numbers,
consistently across \emph{all} of the baseline methods and \emph{all} of the pre-trained target models.

In particular,
in the case of the RGF attack and the Sign-OPT attack,
the spanning attack only needs approximately \emph{half} the queries of the baseline
for a successful attack,
and increases success rates in the meantime.
For example, in the Sign-OPT case against ResNet-50,
the spanning attack improves the success rate to $100\%$,
and more crucially, it only requires $44\%$ queries in terms of the query mean
and $30\%$ queries in terms of the query median!

In the case of the boundary attack,
while success rates of the baseline attack
within the given budget
is not satisfying,
our spanning attack improves the success rates
by a wide margin.
For example, in the Boundary case against VGG-16,
the spanning attack improves the success rate from $81\%$ to $94\%$.

\paragraph{Visualization of the subspace basis.}
A sample of vectors of the resulting orthonormal basis
are visualized in Figure~\ref{fig:basis}.
Note that these vectors reflect low-dimensional structures of the subspace
rather than white Gaussian noise in the input space.

\paragraph{Examples of adversarial images.}
Several adversarial images, crafted by the baseline method and 
the spanning attack, are displayed in Figure~\ref{fig:examples}.
All these adversarial images does not show any significant difference from the 
original images due to the fact that they have the same constraint on the perturbation norm.

\paragraph{Comments on improvement of success rates.}
The black-box attack is a non-convex zeroth-order optimization problem;
there is always a chance that the baseline method is trapped in some local areas,
and as a consequence fails to attack.
For instance, when RGF estimates gradients,
informative random vectors could be too sparse to find an accurate gradient,
as these random vectors are sampled from a large space.
In contrast, the spanning attack employs prior knowledge (encoded in the subspace)
about adversarial perturbations, 
and hence reduces the possibility of being trapped.
That's why the spanning attack could improve success rates and query efficiency simultaneously.
It is noteworthy that the capability of the spanning attack depends on
the subspace dataset (the prior knowledge encoded);
we will carefully investigate it in the next section.

Since the results are consistent across all baseline methods and target models,
in the following we take the RGF attack against ResNet-50 as illustration by default
to avoid unnecessary repetition.


\subsection{Investigation on the subspace}

In this section, we study to what extent 
the subspace impacts on the performance of the spanning attack, 
and furthermore how we could establish the subspace dataset in practice.


\begin{figure}[t]
    \centering
    \subfloat[Success rate]{\label{fig:success}
        \includegraphics[width=.32\textwidth]{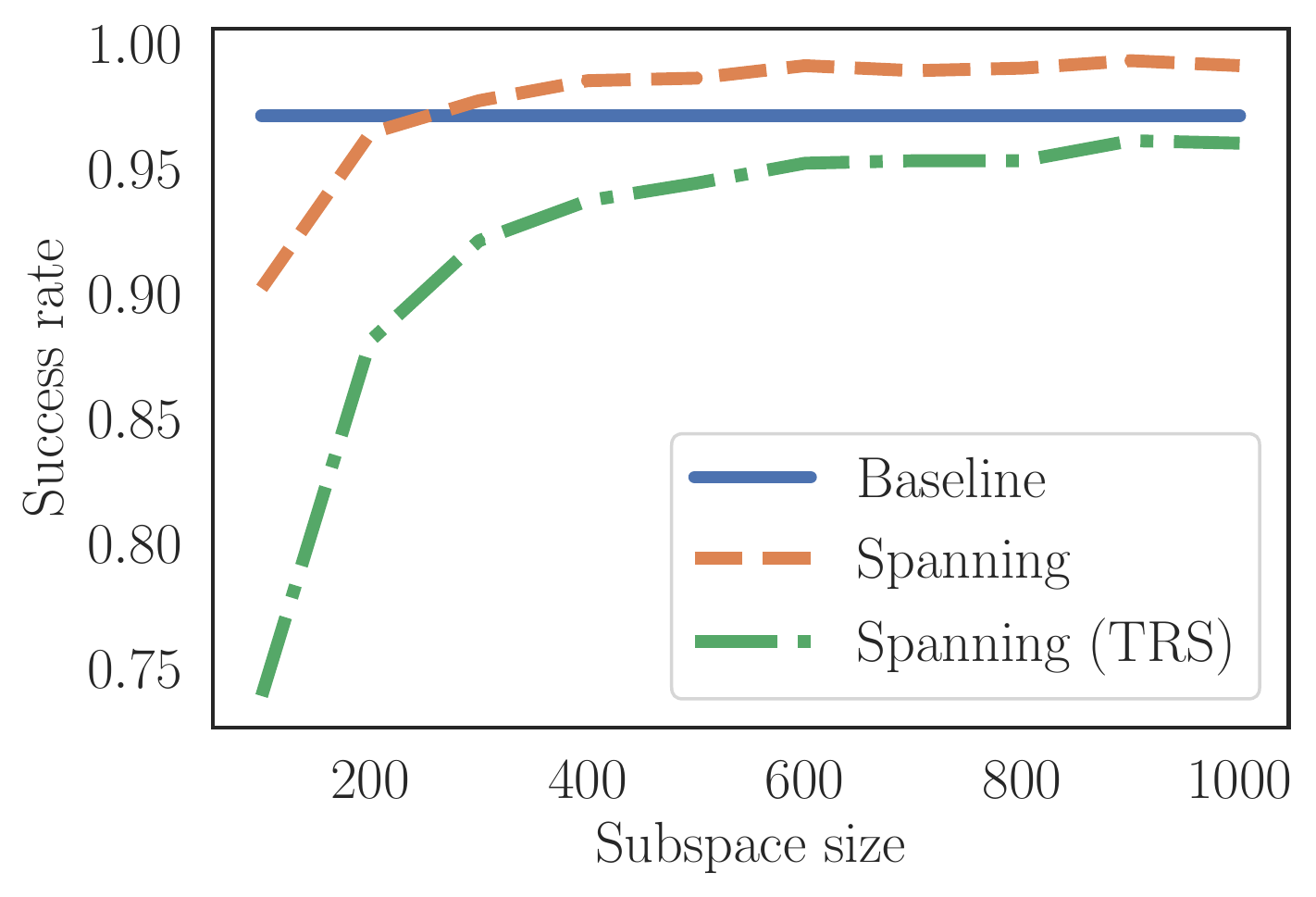}}
    \subfloat[Query mean]{\label{fig:mean}
        \includegraphics[width=.32\textwidth]{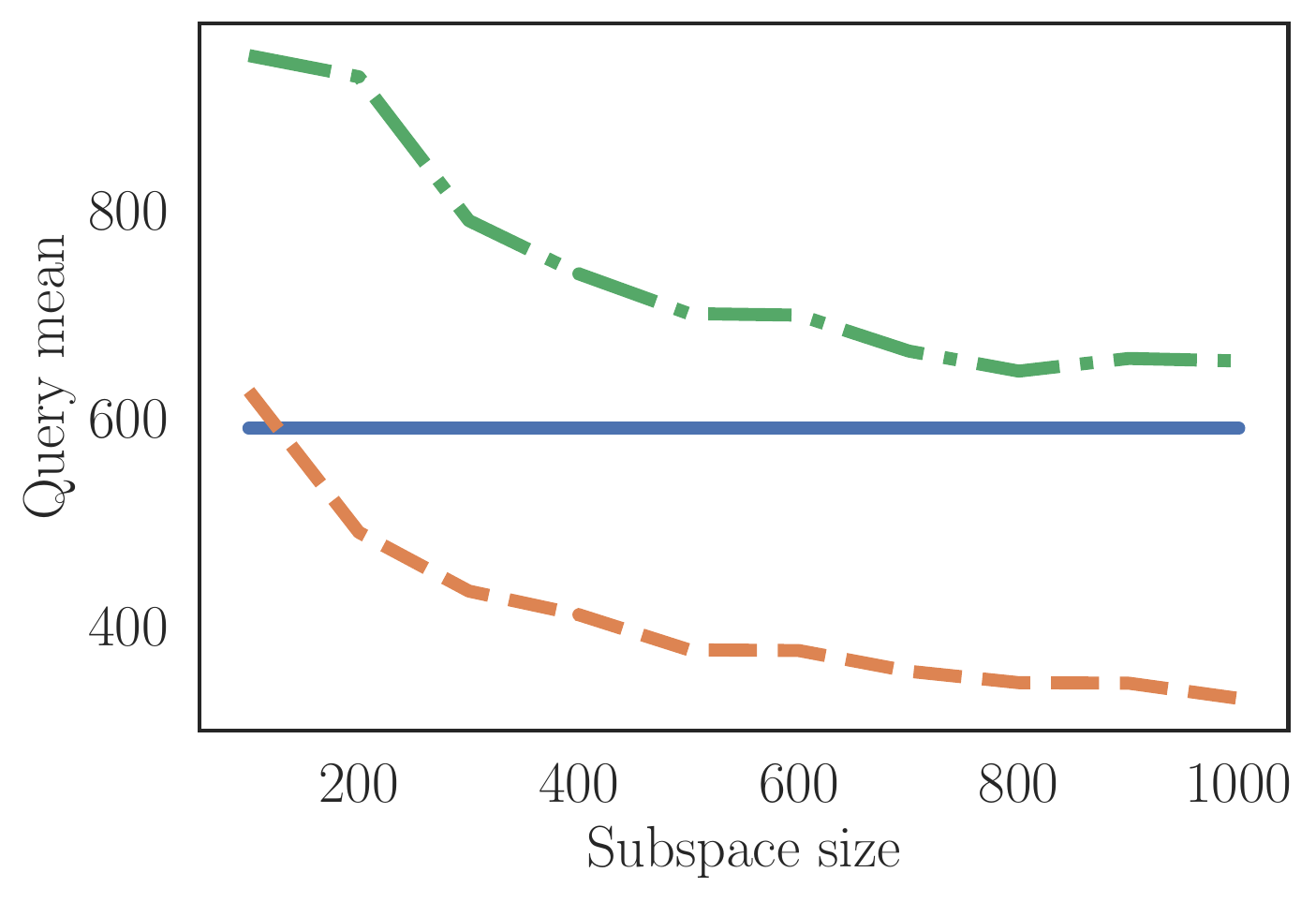}}
    \subfloat[Query median]{\label{fig:median}
        \includegraphics[width=.32\textwidth]{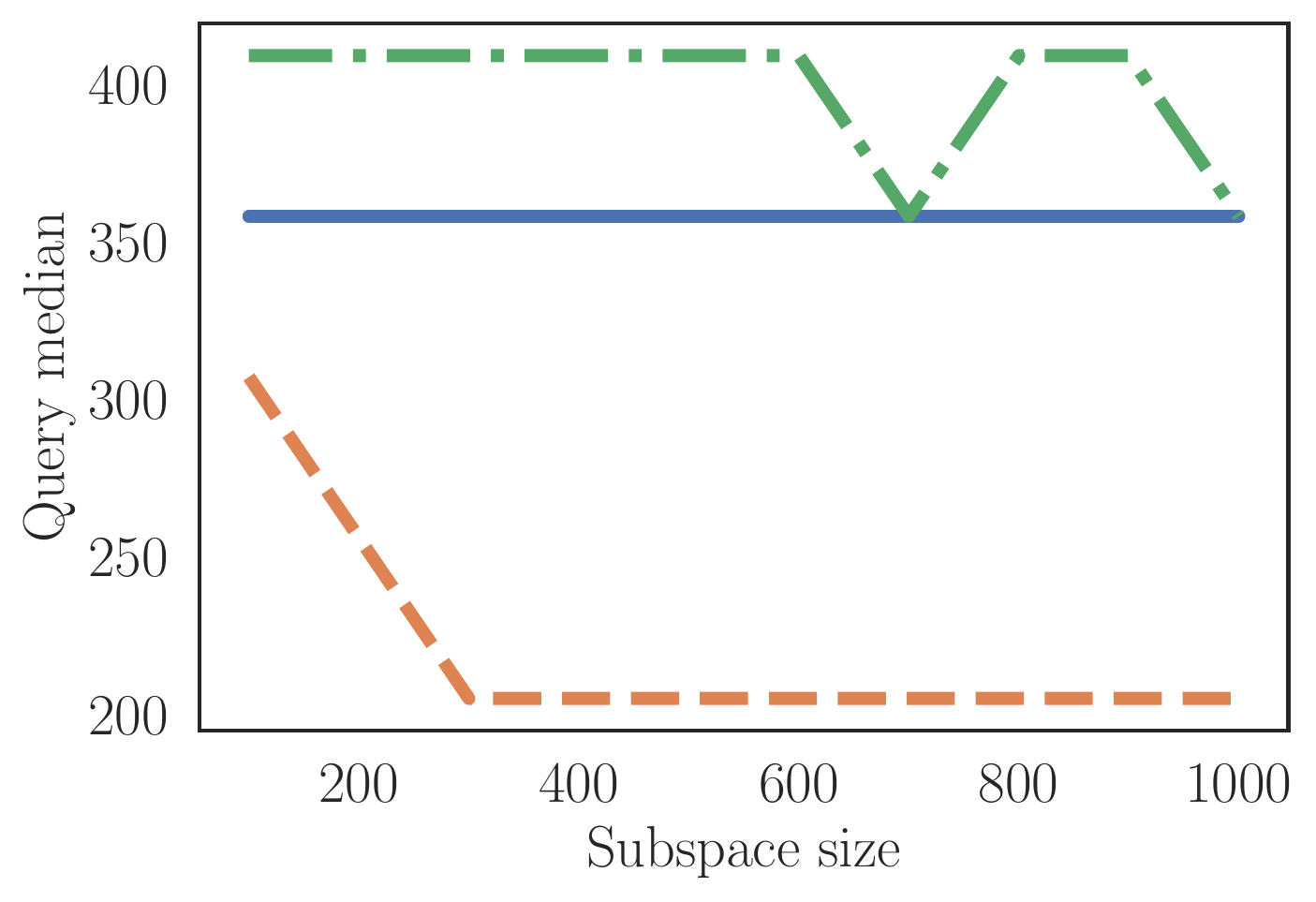}}
    
    \caption{Attack performance with different sizes of the subspace.
    TRS stands for Totally Random Subspace.}
    \label{fig:size}
\end{figure}

\subsubsection{Size of the subspace dataset}
\label{sec:exp:size}

We show attack performance with different sizes of the subspace dataset
in Figure~\ref{fig:size}.
(We will talk about TRS in Section~\ref{sec:exp:distribution}.)
In our experiments, the minimum size is 100 and the maximum size is 1,000.
In this scope,
the larger the subspace size,
the better the performance of the spanning attack.
In contrast, the baseline method is the extreme case 
where the subspace size is $D = 224 \times 224 \times 3 = 150,528$.
Therefore, it is expected that the performance of the spanning attack
will reach the peak and then slide down as the subspace size increases.
It is noteworthy that \emph{even a small subspace dataset,
$\approx 400$ as shown in Figure~\ref{fig:size},
would help the spanning attack defeat the baseline methods.}


\subsubsection{Distribution of the subspace dataset}
\label{sec:exp:distribution}

\begin{table}[t]
    \centering
    \caption{Results of the spanning attack with label-biased subspace datasets,
    spanning attack with Flickr8k subspace datasets,
        spanning attack with totally random subspace (a subspace without any prior),
        bottom spanning attack and top spanning attack.
        TRS stands for Totally Random Subspace.
    }
    \begin{tabular}{@{}lrrr@{}}
        \toprule
                                   & Success rate & Query mean & Query median \\ \midrule
        Baseline                   & 0.971 & 589.575 &	358.0        \\
        Spanning attack            & 0.991 &	329.541 &	205.0        \\ \midrule
        Spanning attack (label biased) & 0.991	& 316.572	& 205.0 \\ 
        Spanning attack (Flickr8k) & 0.990 &	318.333 &	205.0        \\
        Spanning attack (TRS)    & 0.960        & 654.491    & 358.0        \\ \midrule
        Bottom spanning attack     & 0.991 &	298.817	 & 154.0        \\
        Top spanning attack        & 0.991 &	354.346 &	205.0        \\ \bottomrule
    \end{tabular}
    \label{tab:more}
\end{table}

We investigate whether it is necessary to sample the subspace dataset
from the same distribution as the training data.
Specifically, we further try three settings for the subspace dataset:
\begin{itemize}
    \item 1) instances of top 50 classes
    from the ImageNet validation set.
    Note that there are 1,000 classes in total, and hence it is a \emph{label-biased} setting.
    \item 2) instances from the Flickr8k dataset~\citep{hodosh2015framing},
    which is much different from the ImageNet dataset.
    \item 3) instances sampled from a uniform distribution.
    In other words, the spanned subspace could be seen as a \emph{totally random subspace} (TRS) without any prior knowledge.
\end{itemize}



The results are displayed in the middle area of Table~\ref{tab:more}.
On the one hand,
the results of the label-biased and the Flickr8k spanning attack are still better than the baseline,
and competitive with the spanning attack in Section~\ref{sec:main}
(see the upper area of Table~\ref{tab:more} for convenience),
where the subspace dataset is sampled i.i.d.\ (without any bias) from the ImageNet validation set.
It suggests that
\emph{even a biased subspace dataset suffices to work},
which extends the application range of the spanning attack.
In a word,
the subspace dataset does not necessarily
has to be sampled from the same distribution
with the training data.

On the other hand, the spanning attack with a totally random subspace performs
even worse than the baseline.
(To better illustrate this issue, we also show attack performance 
for different subspace sizes 
with totally random subspaces
in Figure~\ref{fig:size}.)
In other words, the totally random subspace plays a \emph{negative} role on performance.
The result validates that
\emph{prior knowledge given by the subspace dataset is necessary,
rather than an arbitrary low-dimensional subspace.}

\paragraph{Discussion on establishing the subspace dataset in practice.}
The experimental results
of Section~\ref{sec:exp:size} and Section~\ref{sec:exp:distribution}
jointly suggest that
the conditions which the subspace dataset has to obtain
is not too strict in practice:
the size of the subspace dataset could be very small,
and the distribution of the subspace could be different from the one of the training data.
Therefore, when applying the subspace attack method in real-world applications,
we only need to collect a \emph{small} set of \emph{unlabeled} data \emph{related} to the target model. 
For instance, if the task is attacking a face recognition system,
one possibility is to crawl the web and find some face pictures in advance.

\subsubsection{Bottom and top spanning attack}

We investigate whether the selective spanning attack could further improve performance.
In our experiments,
the bottom 800 singular vectors (remember the total number of the singular vectors is 1,000)
are used for the bottom spanning attack,
and the top 800 singular vectors are used for the top spanning attack.
The comparison among the original spanning attack,
bottom spanning attack and top spanning attack are shown in the lower area of Table~\ref{tab:more}.
The results show that the bottom spanning attack could further improve performance,
whereas the top spanning attack has a negative impact.
This is an empirical validation that adversarial perturbations are more likely
to appear in directions out of the data manifold, rather than along the data manifold,
as discussed in Section~\ref{sec:selective}.

\subsection{Sensitivity to radii} \label{sec:radii}

We investigate whether the given radius affects
the capability of the spanning attack over the baseline method.
We report success rates, query means and query medians with varying radii.
The results are illustrated in Figure~\ref{fig:sensitivity},
and show that the spanning attack improves the baseline method consistently across different radii.
Note that in all the other experiments the radius is set $\epsilon = \sqrt{0.001 D} \approx 12.27$.

\begin{figure}[t]
    \centering
    \subfloat[Success rate]{\label{fig:sen_success}
        \includegraphics[width=.32\textwidth]{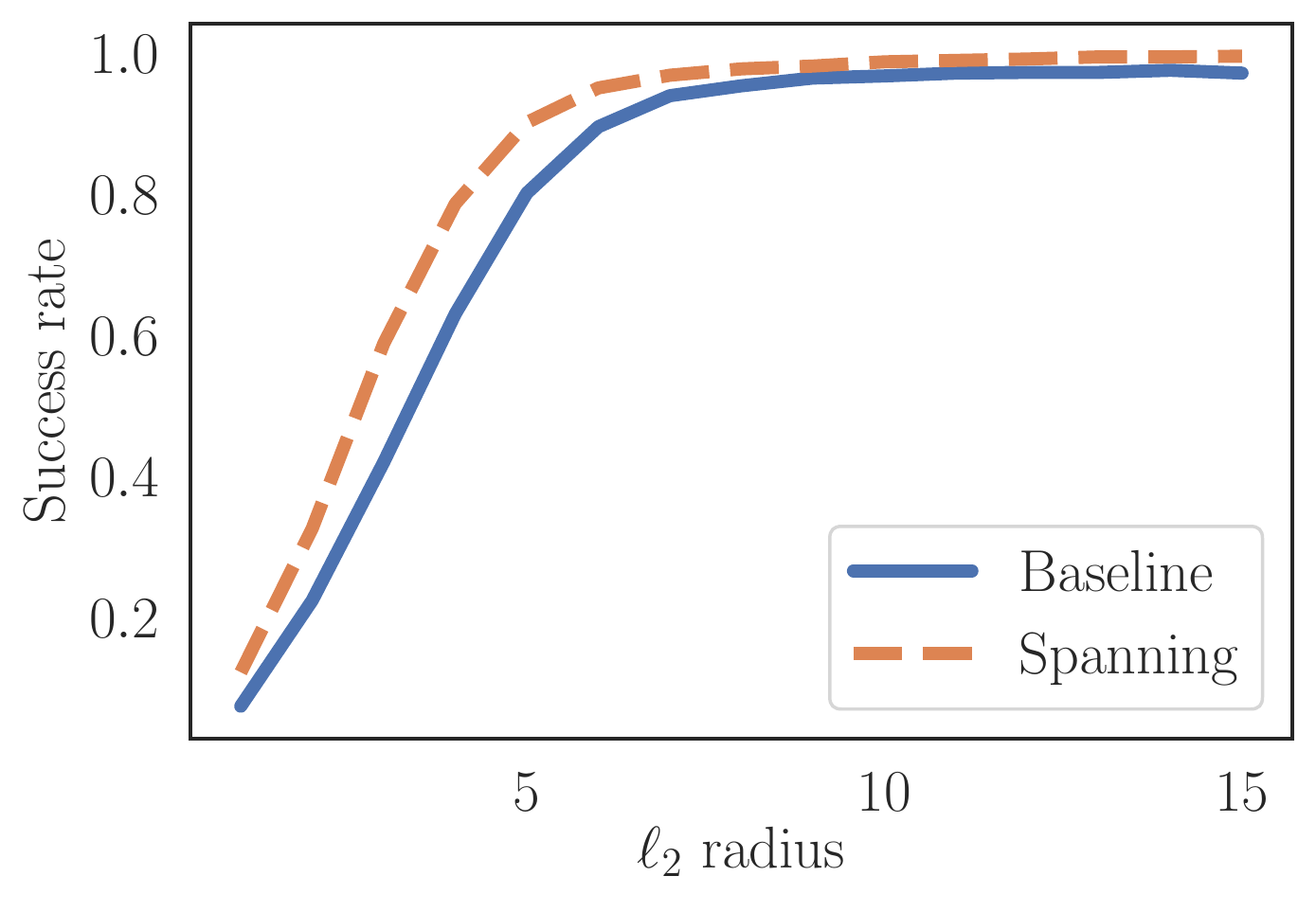}}
    \subfloat[Query mean]{\label{fig:sen_mean}
        \includegraphics[width=.32\textwidth]{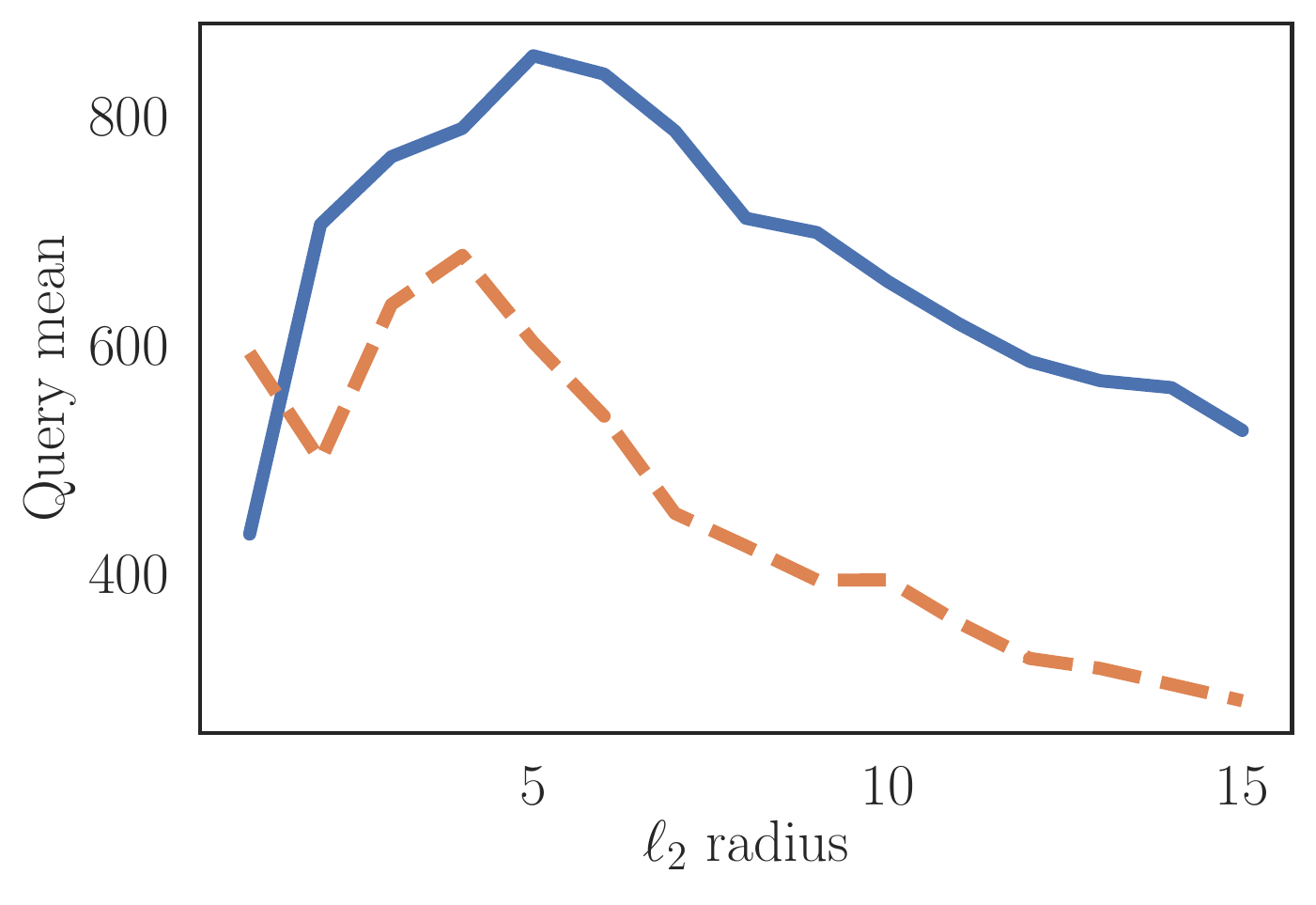}}
    \subfloat[Query median]{\label{fig:sen_median}
        \includegraphics[width=.32\textwidth]{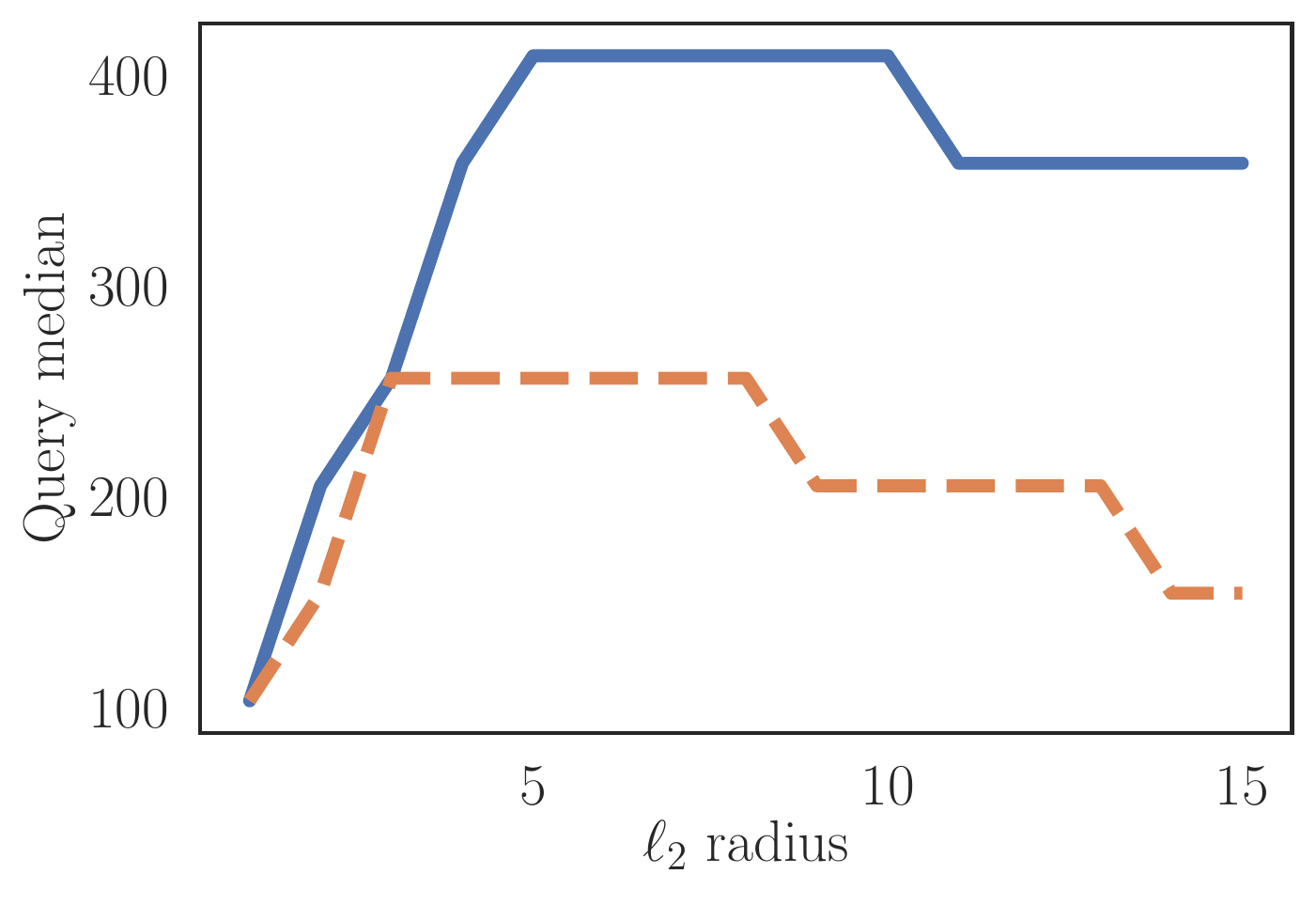}}
    \caption{Attack performance with different radii.}
    \label{fig:sensitivity}
\end{figure}

\subsection{More discussion with related work}

Although the work of \citet{guo2019subspace} has a setting different from ours
as discussed in Section~\ref{sec:related},
for completeness we try to adapt their methods for comparison.
They require an auxiliary \emph{labeled} dataset,
focus on $\ell_\infty$ norm and only considers the soft-label black-box attack.
In order to have a comparison,
we let the subspace dataset be \emph{labeled} with size 1,000.
We notice that \emph{with such a small subspace dataset} in our setting,
\citet{guo2019subspace}'s method does not perform well.
For instance, when attacking ResNet-50, it has the success rate 58.7\% and the query mean 641.283.
The results for attacking VGG-16 and DenseNet-121 are similar
(VGG-16: success rate 68.6\%, query mean 558.044; DenseNet-121: success rate 59\%, query mean 623.603).
It is primarily due to the fact that their method trains substitute models with labeled data.
As a consequence, when the dataset is too small,
it is difficult to train reliable substitute models.


\section{Conclusion} \label{sec:conclusion}

We propose a general technique named the spanning attack
to improve efficiency of black-box attacks.
The spanning attack is motivated by the theoretical analysis
that minimum adversarial perturbations
of machine learning models incline to be in the subspace
of the training data.
In practice, the spanning attack only requires a small auxiliary unlabeled dataset,
and is applicable to a wide range of black-box attacks
including both the soft-label black-box attacks and hard-label black-box attacks.
Our experiments show that the spanning attack can significantly
improve the query efficiency and success rates of black-box attacks simultaneously.

\bibliographystyle{plainnat}

\bibliography{ref}
\end{document}